\documentclass{article}
\usepackage[preprint]{neurips_2026}

\usepackage[utf8]{inputenc} % allow utf-8 input
\usepackage[T1]{fontenc}    % use 8-bit T1 fonts
\usepackage{hyperref}       % hyperlinks
\usepackage{url}            % simple URL typesetting
\usepackage{booktabs}       % professional-quality tables
\usepackage{amsfonts}       % blackboard math symbols
\usepackage{nicefrac}       % compact symbols for 1/2, etc.
\usepackage{microtype}      % microtypography
\usepackage{xcolor}         % colors
 \usepackage{amsmath}
 \usepackage{amssymb}
 \usepackage{amsthm}
 \usepackage{graphicx}
 \usepackage{float}
 \usepackage[table]{xcolor}
 \usepackage{multirow}
 \usepackage{subcaption}

\theoremstyle{plain}
\newtheorem{theorem}{Theorem}
\newtheorem{corollary}[theorem]{Corollary}

\theoremstyle{definition}

\newtheorem{assumption}{Assumption}

\theoremstyle{remark}
\newtheorem*{remark}{Remark}
\usepackage{algorithm}% http://ctan.org/pkg/algorithms
\usepackage{algpseudocode}% http://ctan.org/pkg/algorithmicx
\usepackage{listings}
\usepackage[most]{tcolorbox}

\newtcblisting{promptbox}[1]{%
  enhanced, breakable,
  colback=white, colframe=blue!70!black,
  coltitle=white, colbacktitle=blue!70!black,
  fonttitle=\bfseries, title={#1},
  boxrule=1pt, arc=2mm,
  left=2mm, right=2mm, top=1mm, bottom=1mm,
  before skip=6pt, after skip=6pt,
  listing only,
  listing options={
    basicstyle=\ttfamily\small,
    breaklines=true,
    breakatwhitespace=false,
    columns=fullflexible,
    keepspaces=true,
    showstringspaces=false,
  }
}

\title{CoCoDA: Co-evolving Compositional DAG for Tool-Augmented Agents}

\author{%
\begin{tabular}{c}
\textbf{%
Ziyang Yu\textsuperscript{1} \quad
Qiyue Li\textsuperscript{2} \quad
Liang Zhao\textsuperscript{1}
}
\\[0.5em]
\normalfont
\textsuperscript{1}Emory University \qquad
\textsuperscript{2}Beijing Normal-Hong Kong Baptist University
\end{tabular}
}

\begin{document}

\maketitle

\begin{abstract}

Tool-augmented language models can extend small language models with external executable skills, but scaling the tool library creates a coupled challenge: the library must evolve with the planner as new reusable subroutines emerge, while retrieval from the growing library must remain within a fixed context budget. Existing tool-use and skill-library methods typically treat tools as flat or text-indexed memories, causing prompt cost to grow with library size and obscuring the typed, compositional structure of executable code. We propose CoCoDA, a framework that co-evolves the planner and tool library through a single code-native structure: a compositional code DAG. Nodes are primitive or composite tools, edges encode invocation dependencies, and each node stores a typed signature, description, pre/post-condition specification, and worked examples. At inference time, Typed DAG Retrieval prunes candidates by symbolic signature unification, ranks survivors by descriptions, filters them by behavioral specifications, and disambiguates with examples, keeping expensive context materialization on progressively smaller candidate sets. At training time, successful trajectories are folded into validated composite tools, while the planner is updated with a DAG-induced reward that credits composites by their primitive expansion size. We provide theoretical results showing retrieval cost reduction, sublinear retrieval time, compositional advantage under the shaped reward, monotone co-evolution under conservative updates, and DAG well-formedness. Across mathematical reasoning, tabular analysis, and code task benchmarks, CoCoDA enables an 8B student to match or exceed a 32B teacher on GSM8K and MATH and consistently improves over strong tool-use and library-learning baselines.

\end{abstract}
\vspace{-1em}
\section{Introduction}
\vspace{-0.7em}
Tool-augmented language models extend a planner with external executable tools, allowing even small language models to offload computation, symbolic manipulation, and long-horizon procedures to a reusable library. This division of labor is especially attractive for small language models, whose context windows and parametric capacity are limited. In principle, a sufficiently rich tool library can act as an external procedural memory: the planner retrieves a small set of relevant tools, invokes them through an executor, and composes their outputs into a final answer.

However, scaling such a library creates a tension. On one hand, the library must evolve with the planner. As the planner explores new tasks, it discovers recurring sub-trajectories that should be abstracted into reusable tools; otherwise, the planner repeatedly reconstructs the same primitive computations. On the other hand, every newly added tool increases the retrieval burden. A flat library makes the planner's prompt cost grow with the number of tools, eventually exhausting the context budget before the library becomes useful. Thus, tool-library learning for small language models requires solving two coupled problems: how to grow the library with the policy, and how to retrieve from the growing library under a fixed context window.

Existing approaches address only parts of this problem. Static tool-use systems assume a fixed inventory and train or prompt the model to call tools from that inventory. Skill-library and agent-memory methods append new skills during exploration, but typically store them as flat text records or natural-language memories, so retrieval cost grows with library size and the internal dependency structure of executable code remains hidden. Hierarchical memory and RAG methods reduce context cost by clustering summaries, but their hierarchies are induced by topical similarity rather than by executable composition. Conversely, code-library learning and program-synthesis methods exploit compositional structure, but they are not designed for LLM planners operating under token-budgeted retrieval. The missing mechanism is a structure that is simultaneously code-aware, retrieval-efficient, and learnable online.

Our key observation is that executable tools contain structure that text memories do not. A tool has a typed signature, executable dependencies, behavioral specifications, and concrete input-output examples. These signals can be used not only to organize the library, but also to decide which records must be exposed to the planner. We therefore represent the library as a compositional code DAG. Nodes are primitive or composite tools, edges are invocation dependencies extracted from executable bodies, and each node stores a four-level record: typed signature, description, pre/post-condition specification, and worked examples. This DAG is the single object through which CoCoDA couples library growth and context-bounded retrieval.

At inference time, CoCoDA performs Typed DAG Retrieval. Given a query-induced subgoal, retrieval first prunes candidates by symbolic signature unification, then ranks the survivors by descriptions, filters them by pre/post-condition compatibility, and finally uses examples for disambiguation. Because expensive LLM-billed records are materialized only for surviving candidates, the retrieval cost contracts across stages rather than scaling with the full library. Because the hierarchy follows invocation dependencies rather than text similarity, retrieval can surface a composite tool that behaviorally subsumes a primitive sub-trajectory.

At training time, CoCoDA co-evolves the planner and the library. Successful trajectories are passed to a fixed teacher abstractor, which proposes composite tools. A proposal is committed only if it preserves acyclicity and satisfies dependency and specification checks. The planner is updated with a DAG-induced reward that credits tools by their primitive expansion size: invoking a composite that represents multiple primitive calls receives positive saved-call credit, while invoking a primitive receives none. Thus, among behavior-equivalent trajectories with the same verified outcome, the planner is encouraged to use reusable composites. The same DAG therefore plays two roles: it keeps retrieval within the context budget and defines the structural signal that trains the planner to exploit newly learned abstractions.

We make the following contributions. First, we formulate tool-library learning for small language models as a joint planner-library optimization problem with explicit retrieval-cost and context-window constraints. Second, we propose Typed DAG Retrieval, a code-native retrieval cascade that uses signatures, invocation edges, specifications, and examples to keep prompt cost sublinear in library size. Third, we introduce an online co-evolution procedure that inserts validated composite tools and trains the planner with a DAG-induced saved-call reward. Fourth, we provide theoretical results showing retrieval cost reduction, sublinear retrieval time, compositional advantage under the shaped reward, monotone co-evolution under conservative updates, and DAG well-formedness. Finally, across mathematical reasoning, tabular analysis, and code task benchmarks, CoCoDA enables an 8B student to match or exceed a 32B teacher on GSM8K and MATH and consistently improves over strong tool-use and library-learning baselines.
\vspace{-1em}
\section{Related Works}
\vspace{-0.5em}
\subsection{Tool-Augmented Language Models}
\vspace{-0.5em}
Toolformer~\citep{toolformer2023} teaches an LLM to self-annotate API
calls and interleave them with generation, while
ReAct~\citep{react2023} threads reasoning traces with tool invocations
at inference time. ToolLLM~\citep{toolllm2024},
Gorilla~\citep{gorilla2024}, and AnyTool~\citep{anytool2024} scale tool
use to thousands of real-world APIs through instruction tuning and
hierarchical API retrieval. ToolAlpaca~\citep{toolalpaca2023} and
Gorilla~\citep{gorilla2024} distill tool-use trajectories from a
stronger teacher into smaller planners. RLTF~\citep{rltf2024},
ToRL~\citep{torl2024}, and ToolRL~\citep{toolrl2025} train LLMs
end-to-end with outcome rewards from execution feedback, and
ReTool~\citep{retool2025} and ARTIST~\citep{artist2025} extend the
recipe to multi-turn agentic settings on top of
GRPO~\citep{deepseekmath2024,deepseekr1}. Across these works, the tool
inventory is treated as a static, flat \emph{text-indexed} resource that
the policy optimizes against rather than reshapes, leaving both
co-evolution and the per-prompt scaling of retrieval cost unaddressed.
\vspace{-0.5em}
\subsection{Skill Library Learning and Agent Memory}
\vspace{-0.5em}
Prior agent-memory work treats the library as a flat \emph{text}
collection: Voyager~\citep{voyager2023}, Ghost in the
Minecraft~\citep{zhu2023ghost}, CRAFT~\citep{yuan2024craft},
LATM~\citep{latm2023}, CREATOR~\citep{creator2023},
TroVE~\citep{trove2024}, and ReGAL~\citep{regal2024} grow skill or
function memories offline whose dependency structure is invisible to
both retrieval and training, while
MemGPT~\citep{packer2024memgpt}, Generative
Agents~\citep{park2023generative}, Reflexion~\citep{shinn2023reflexion},
and Self-Refine~\citep{madaan2023selfrefine} hierarchize or self-refine
memory by topical similarity over natural language rather than by code
structure. A parallel line that does organize code into hierarchies
splits along an axis nothing has crossed: a hierarchy is either
\emph{code-aware} or \emph{built for an LLM under a fixed context
budget}, never both. DreamCoder~\citep{dreamcoder2021},
Stitch~\citep{stitch2023}, and software-engineering call-graph indexing
yield genuinely code-aware hierarchies, typed, compositional,
statically extractable, but target symbolic synthesizers or human
developers, with no token-budget tiering, no signature-level prefilter,
and no certified macro-action equivalence; conversely,
CodeRAG~\citep{coderag2024} and CodeT5+~\citep{codet5plus2023}
retrievers target LLMs but use a flat embedding space, and
RAPTOR~\citep{raptor2024} tiers for LLM context budgets yet clusters by
text similarity, discarding every code-specific signal. Our framework
occupies the missing quadrant via three properties that serve both
halves at once: (i)~\emph{typed signatures} are statically extractable
(code-aware) and prune candidates symbolically before any LLM call
(LLM-aware); (ii)~\emph{static invocation edges} follow real reuse
structure while keeping per-step cost proportional to local fan-out
rather than library size; and (iii)~\emph{pre/post-condition-validated}
composites preserve executable behavior and license \emph{lossless}
substitution of one composite token for a subtree of primitives in the
prompt, making context cost provably sublinear in library size where
prior code hierarchies remain worst-case linear.
\vspace{-0.5em}
\subsection{Policy--Memory Co-Evolution}
\vspace{-0.5em}
Voyager~\citep{voyager2023} is the canonical co-evolution setup,
appending new skills to a shared library as a frozen LLM explores, and
Experiential Co-Learning~\citep{qian2024experiential} extends the idea
to multi-agent software development. CREATOR~\citep{creator2023} and
LATM~\citep{latm2023} interleave tool creation with tool use, while
CRAFT~\citep{yuan2024craft} specializes toolsets to tasks encountered
during deployment. These works establish \emph{prompt-space, text-memory} co-evolution
where the library grows by text appends against a frozen or lightly
instructed large model and prompt cost grows linearly with library size,
rather than gradient-based co-evolution against a verifiable reward over
a structured \emph{code} library whose hierarchy keeps retrieval cost
sublinear.
\vspace{-1em}
\section{Proposed Method}
\vspace{-0.5em}
\label{sec:method}
\subsection{Problem Formulation}
\label{sec:problem}
\vspace{-0.5em}
\textbf{Setup.}
We study tool-mediated problem solving for a planner $\pi_\theta$ on
tasks $(q,y)\sim\mathcal{D}$, with $q$ a problem instance and $y$ a
verifiable outcome (final answer or unit-test oracle). The planner
accesses a tool library $\mathcal{L}=(\mathcal{V},\mathcal{E},\mathcal{I})$
where $\mathcal{V}$ is the set of tools, $\mathcal{E}$ records
invocation dependencies, and $\mathcal{I}$ stores per-tool records. Given $q$,
retrieval builds a candidate set
$\mathcal{C}_q(\mathcal{L})\subseteq\mathcal{V}$ that augments the
planner's prompt; the planner then samples
$\tau\sim\pi_\theta(\cdot\mid q,\mathcal{C}_q(\mathcal{L}))$ invoking
tools $(v_1,\ldots,v_T)$ via an executor. Quality is measured by a
deterministic verifier $R_{\text{res}}(\tau)\in[0,1]$, and the
context-token cost of $\mathcal{C}_q$ by $C(q,\mathcal{L})$.

\textbf{Compositional Tool DAG.}
We organise $\mathcal{L}$ as a DAG
$\mathcal{G}=(\mathcal{V},\mathcal{E})$, where
$\mathcal{V}=\mathcal{V}_p\cup\mathcal{V}_c$ partitions tools into
primitive nodes (atomic executable functions) and composite nodes
(macro-actions that invoke one or more existing tools). Each $v$
carries $(d(v),\mathrm{Ch}(v),\mathcal{I}(v))$: depth $d(v)=0$ for primitives
and $d(v)=1+\max_{u\in\mathrm{Ch}(v)}d(u)$ for composites; an ordered
child set $\mathrm{Ch}(v)$ realising $v$ at the next finer level; and a
record $\mathcal{I}(v)=(L_1,L_2,L_3,L_4)(v)$ giving a typed signature,
description, pre/post specification, and worked examples. Edges encode
invocation, so $\mathcal{G}$ stratifies the library by abstraction
rather than surface similarity. We further define the \emph{flat size}
$\mathrm{flat}(v)=1$ for primitives and
$\mathrm{flat}(v)=\sum_{u\in\mathrm{Ch}(v)}\mathrm{flat}(u)$ for
composites (counted with multiplicity: a child invoked $k$ times
contributes $k\,\mathrm{flat}(u)$), i.e.\ the number of primitive
leaves in the recursive expansion of $v$, and the
\emph{saved-call count} $\Phi(v):=\mathrm{flat}(v)-1$. Thus $\Phi=0$
on primitives and $\Phi(v)\ge 1$ on any composite expanding to $\ge 2$
primitive invocations.

{\setlength{\intextsep}{4pt}%        % 算法与正文之间的间距
\setlength{\floatsep}{4pt}% 
\begin{figure}[ht]
\centerline{\includegraphics[width=\columnwidth]{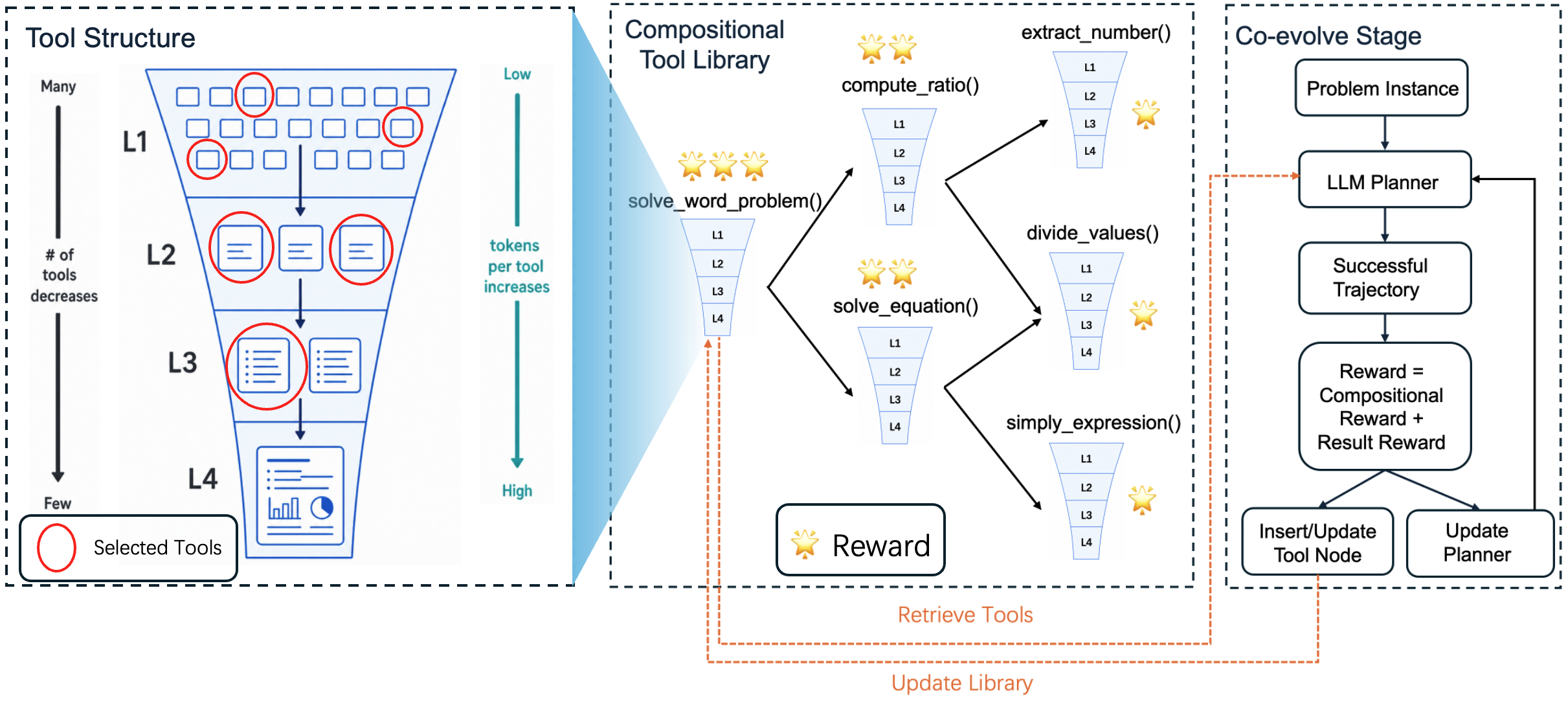}}
    \caption{Overview of the \textit{CoCoDA} framework.
    \textbf{Left (Tool Structure):} each tool stores a $4$-layer record ($L_1$ signature, $L_2$ description, $L_3$ specifications, $L_4$ examples), used by Typed DAG Retrieval as a cascade filter from cheap to expensive layers.
    \textbf{Middle (Compositional Tool Library):} a code DAG of primitive and composite tools (e.g., \texttt{compute\_ratio}, \texttt{solve\_equation}) whose edges encode invocation dependencies, letting macro-actions decompose into reusable subroutines.
    \textbf{Right (Co-evolve Stage):} the planner rolls out trajectories on the DAG; successful ones are abstracted into new composite tools via \textsc{Insert/Update Tool Node}, while the planner is updated with a compositional plus result reward, co-evolving library and policy.}
    \label{fig:framework}
\end{figure}}

\textbf{Tool-mediated output.}
We co-maintain $\pi_\theta$ (consumes $\mathcal{C}_q$, emits $\tau$)
and $\mathcal{L}$ (read by retrieval, written by \textsc{InsertTool},
which admits composites distilled from successful rollouts). We write
the joint output $O_{\pi_\theta,\mathcal{L}}(q)$ to make this explicit.
A fixed teacher abstractor $\mathcal{M}_T$ proposes candidates;
\textsc{InsertTool} accepts only those preserving acyclicity and
dependency specifications.

\textbf{Optimization objective.}
We frame \textit{CoCoDA} as a single optimization over the
planner--library pair in terms of quantities intrinsic to the LLM. Let
$\mathcal{D}_{\text{ev}}$ be the training distribution. For
$\tau\sim\pi_\theta(\cdot\mid q,\mathcal{C}_q(\mathcal{L}))$ with
$T(\tau)$ tool calls, let $\bar C_{\text{retr}}(q,\mathcal{L})$ be the
average per-call retrieval token cost; total inference token cost
factorises as $T(\tau)\cdot\bar C_{\text{retr}}(q,\mathcal{L})$. We seek
\vspace{-0.5em}
\begin{equation}
\label{eq:objective}
\begin{aligned}
(\pi_{\theta^\star},\mathcal{L}^\star)
\;&=\;
\mathop{\arg\max}\nolimits_{\pi_\theta,\,\mathcal{L}}\;
\mathop{\mathbb{E}}\nolimits_{(q,y)\sim\mathcal{D}_{\text{ev}}}\!\Bigl[\,
R_{\text{res}}\bigl(O_{\pi_\theta,\mathcal{L}}(q)\bigr)
\;-\;
\mathop{\mathbb{E}}\nolimits_{\tau}\!\bigl[\,T(\tau)\cdot\bar C_{\text{retr}}(q,\mathcal{L})\,\bigr]
\,\Bigr]\\
&\text{s.t.}\;\;
\sum\nolimits_{v\in\mathcal{S}_{\ell-1}}\!c_\ell(v)\;\le\;W
\qquad\forall\,\ell\in\{2,3,4\},
\end{aligned}
\end{equation}
% \vspace{-0.5em}
where the second term is the expected token cost of solving $q$.
Retrieval issues three LLM calls (at $L_2,L_3,L_4$), each materialising
$L_\ell(v)$ for surviving $v\in\mathcal{S}_{\ell-1}$; $L_1$ is a
symbolic type filter and incurs no LLM call. After $L_4$ each typed
sub-goal commits to a single winner that the executor invokes
directly, so the context-window limit applies \emph{per retrieval
call}, giving the three constraints above. None of these quantities is
method-specific: $R_{\text{res}}$ is the task verifier,
$T\cdot\bar C_{\text{retr}}$ is the standard token bill of any
tool-augmented LLM, and $W$ is a model context constant.
\textbf{Three structural axes.}
Eq.~\eqref{eq:objective} exposes three orthogonal cost axes, each
addressed by a different component:
\textbf{(A)} \emph{Per-call retrieval cost
$\bar C_{\text{retr}}$.} Flat retrieval gives
$\bar C_{\text{retr}}=\sum_v\sum_\ell c_\ell(v)$, linear in
$|\mathcal{V}|$. Typed DAG Retrieval (Section~\ref{sec:hier-index})
makes this sublinear via a cascade
$\mathcal{S}_1\!\supseteq\!\cdots\!\supseteq\!\mathcal{S}_4$ paying
$c_\ell$ only on survivors;
\textbf{(B)} \emph{Number of tool calls $T(\tau)$.} A depth-$d$
composite subsuming an $m$-step primitive sub-trajectory replaces
$m$ planner calls with one. Co-Evolution (Section~\ref{sec:coevo})
shrinks $T(\tau)$ by growing $\bar d(\mathcal{V}_c)$ via
teacher-distilled \textsc{InsertTool} commits;
\textbf{(C)} \emph{Per-call context limit $W$.} Each
$\ell\in\{2,3,4\}$ is a separate LLM call that fit in $W$ on its
own; a flat single call scaling with $|\mathcal{V}|$ breaches $W$ on
any non-trivial library. The cascade pays $c_\ell$ only on
$\mathcal{S}_{\ell-1}$, so each call fits regardless of
$|\mathcal{V}|$.

\textbf{Roadmap.}
\emph{Typed DAG Retrieval} (Section~\ref{sec:hier-index})
handles axes (A) and (C) via the cascade
$L_1\!\to\!L_2\!\to\!L_3\!\to\!L_4$, paying each $c_\ell$ only on
survivors so per-stage calls fit under $W$ and average retrieval cost
is sublinear in $|\mathcal{V}|$. At training, \emph{Co-Evolution under
a graph-aware reward} (Section~\ref{sec:coevo}) handles axis (B):
\textsc{InsertTool} commits grow $\bar d(\mathcal{V}_c)$ while a
structural reward credits the planner for routing through deeper
composites, shrinking $T(\tau)$. The two close the loop: a deeper
library makes retrieval cheaper per call and shorter, while
cheaper retrieval keeps the prompt under $W$ as the library grows.
% \subsection{Typed DAG Retrieval: Approximating the Retrieval Cost Block}
% \label{sec:hier-index}
\vspace{-0.5em}
%%% TODO: token cost 的关系要表达出来:
%%% 1. 定理 2. 图上表达 3. 文中要写时间复杂度
%%% 要体现出到前面就停的优势 时间复杂度 用期望
\subsection{Typed DAG Retrieval: Approximating the Retrieval-Cost Block}
\label{sec:hier-index}
\vspace{-0.5em}
This section addresses axes (A) and (C) of Eq. (1). Typed DAG Retrieval is coarse-to-fine along two orthogonal axes. The first is the \emph{abstraction axis} induced by the compositional DAG: retrieval starts from the highest-level tool whose signature can cover the current subgoal and descends to children only when that composite is too broad, incompatible, or insufficiently discriminative. Thus, the retriever stops as soon as a validated composite can solve the subgoal, instead of blindly opening all lower-level subskills. The second is the \emph{evidence axis} inside each candidate node: compatibility is judged using increasingly informative and increasingly expensive records, from typed signatures to descriptions, specifications, and examples. These two axes play different roles. The abstraction axis decides \emph{how deep} retrieval must go in the executable skill DAG, while the evidence axis decides \emph{which alternative implementation} at the current abstraction level should be surfaced.

The key asymmetry is that tool compatibility becomes increasingly expensive to judge. Type compatibility is symbolic; semantic relevance requires description ranking; behavioral compatibility requires contract checking; and fine-grained disambiguation requires examples. CoCoDA orders these checks from cheapest to most expensive and pays each expensive cost only on the survivors of earlier stages.

For a query-induced subgoal, Typed DAG Retrieval constructs a nested sequence $V = S_0 \supseteq S_1 \supseteq S_2 \supseteq S_3 \supseteq S_4$. The final set $S_4$ is surfaced to the planner or executor. Importantly, this survivor cascade is applied lazily over the DAG: a composite node is expanded into its children only if the cascade cannot certify the composite itself as a sufficient tool for the subgoal.
\vspace{-1em}
\paragraph{L1: symbolic signature pruning.}
The first stage drops any tool whose signature does not unify with the
sub-goal:
$\mathcal{S}_1=\{v\in\mathcal{V}:L_1(v)\text{ unifies with the sub-goal signature}\}$.
Unification is decided symbolically on a static type lattice via an
inverted index, so the stage incurs zero prompt-token cost; only its
symbolic work scales with $|\mathcal{V}|$, bounded by
Corollary~\ref{cor:retrieval-time}. Running it first lets every later
LLM-billed stage operate on a much smaller survivor set.
\vspace{-1em}
\paragraph{L2: edge-guided semantic expansion.}
The second stage exposes descriptions $L_2(v)$ only for
$v\in\mathcal{S}_1$; the planner ranks these by semantic relevance to
$q$ and keeps the top-$k_2$ shortlist $\mathcal{S}_2$. Because
candidates live in an invocation DAG rather than a flat pool, ranking
can climb to a subsuming composite or descend to a primitive needed to
complete the sub-goal—structural moves that similarity-only
hierarchies cannot make. Cost is paid on $\mathcal{S}_1$, not
$\mathcal{V}$.
\vspace{-1em}
\paragraph{L3: specification filtering.}
The third stage exposes pre/post-conditions $L_3(v)$ only for
$v\in\mathcal{S}_2$ and treats them as a hard accept/reject judged by
the planner under the prompt of Appendix~\ref{app:prompt-l3}: a
candidate is dropped if its preconditions are not satisfiable in the
current sub-goal or its postconditions do not imply the desired
outcome. Binary verdicts admit no false positives under faithful
specification reading, so a surviving composite is certified as a
macro-action equivalent to its primitive sub-trajectory
(Theorem~\ref{thm:compositional}). Cost is paid only on the
$k_2$-shortlist.
\vspace{-1em}
\paragraph{L4: example-based reranking.}
The final stage exposes worked examples $L_4(v)$ only for
$v\in\mathcal{S}_3$ and reranks the survivors to break near-ties among
tools with similar signatures and contracts. 

Since $L_1$ is free, the total LLM-billed retrieval cost of the cascade is
$\sum_{\ell=2}^{4}\!\sum_{v\in\mathcal{S}_{\ell-1}}\!c_\ell(v)$,
which Theorem~\ref{thm:retrieval} bounds by
$\alpha_1\alpha_2\,C_{\text{flat}}+o(C_{\text{flat}})$ and
Corollary~\ref{cor:retrieval-time} converts to sublinear time.
\vspace{-0.5em}
\paragraph{Co-evolution tightens the cascade.}
The decomposition above is independent of which nodes occupy
$\mathcal{V}$. When co-evolution (Sec.~\ref{sec:coevo}) folds a
length-$m$ primitive sub-trajectory into one validated composite,
that composite occupies a single slot of $\mathcal{S}_2$ instead of
$m$, strictly contracting every later $\mathcal{S}_\ell$ on subsequent
queries (Remark~\ref{rem:coevo-budget}). The same abstraction that
shortens planner trajectories therefore also shortens future retrieval
prompts.
\vspace{-0.5em}
\subsection{Co-Evolution as a Coupled Update on the Joint Objective}
\label{sec:coevo}
\vspace{-0.5em}
Section~\ref{sec:hier-index} treats $(\pi_\theta,\mathcal{L})$ as
fixed: any sub-trajectory the planner stitches from primitives is
consumed for the current query and not written back. Updating only
$\pi_\theta$ leaves $\bar d(\mathcal{V}_c)$ unchanged, so every
subsequent query keeps paying the same retrieval cost in
Eq.~\eqref{eq:objective}. We therefore approach the joint objective
via a \emph{coupled} update that co-evolves $\pi_\theta$ and
$\mathcal{L}$ on the same gap signal, with $\mathcal{M}_T$ fixed.
The library is edited via atomic single-tool operators: \textsc{Add}
introduces a primitive or composite with depth assigned by
Algorithm~\ref{alg:insert-tool}; \textsc{Merge} consolidates $t^\star$
with a semantically overlapping entry; \textsc{Reject} prunes
proposals failing dependency specifications or acyclicity. They are
jointly closed under composition: any acyclic $\mathcal{L}'$ is
reachable from any $\mathcal{L}$ by a finite sequence of these
operations.
\vspace{-0.5em}
\paragraph{Surrogate Reward}
\label{sec:coevo-reward}
\vspace{-0.5em}
The cost block is $\mathbb{E}_\tau[T(\tau)\cdot\bar
C_{\text{retr}}(q,\mathcal{L})]$. Within a single rollout
$\bar C_{\text{retr}}$ is fixed by $\mathcal{L}$ and retrieval, not
$\pi_\theta$, so minimising the cost block over the policy reduces to
maximising $-T(\tau)$. By the definition of $\Phi$, every invocation
satisfies $\mathrm{flat}(t_i)-\Phi(t_i)=1$, so summing over the
trajectory gives
\begin{equation}\label{eq:cost-decomp}
-T(\tau)\;=\;\sum\nolimits_{i=1}^{T(\tau)}\bigl(\Phi(t_i)-\mathrm{flat}(t_i)\bigr)
\;=\;\underbrace{\sum\nolimits_{i=1}^{T(\tau)}\Phi(t_i)}_{\text{(i) saved-call credit}}
\;-\;\underbrace{T_{\text{prim}}(\tau)}_{\text{(ii) primitive workload}},
\end{equation}
where $T_{\text{prim}}(\tau):=\sum_i\mathrm{flat}(t_i)$ counts
primitive invocations after expansion. Term (ii) depends only on the
sub-computation $\tau$ realises, not on the composite-vs-primitive
phrasing: lossless substitution leaves $T_{\text{prim}}$ invariant.
Within a GRPO group on the same $q$, $T_{\text{prim}}$ acts as a
query-conditional baseline absorbed by group-mean subtraction in the
advantage, leaving only term (i) as a non-trivial gradient signal. We
therefore take
$R_{\text{comp}}(\tau)\;=\;\sum\nolimits_{i=1}^{T(\tau)}\Phi(t_i)\;=\;\sum\nolimits_{i=1}^{T(\tau)}\bigl(\mathrm{flat}(t_i)-1\bigr)$
as the surrogate reward. The shaped training reward is
$R(\tau)=R_{\text{res}}(\tau)+\lambda\,R_{\text{comp}}(\tau)$, with
$\lambda$ rescaling saved-call credit (units of ``primitive calls'')
against the unit-interval verifier reward. Length-cheating is
structurally impossible: padding a primitive contributes $\Phi=0$, so
$R_{\text{comp}}$ rewards only actual composite expansion.

$R_{\text{comp}}$ depends on $\mathcal{G}$ only through the scalar
$\mathrm{flat}(t_i)$, not directly through edges or depth. The DAG's
structural content is used elsewhere: retrieval traverses edges to
bound per-call cost, \textsc{InsertTool} uses them for acyclicity and
child-specification validation, and Theorem~\ref{thm:monotone} relies on
DAG inclusion $\mathcal{V}_k\subseteq\mathcal{V}_{k+1}$. The reward
inherits the DAG's guarantees because $\mathrm{flat}$ is a structural
invariant of recursive expansion, not a teacher annotation; without
the DAG, $\mathrm{flat}$ would be unverifiable and $R_{\text{comp}}$
trivially exploitable.

For each $q$ we run typed-DAG retrieval and rollout to obtain
$\{\tau^{(i)}\}_{i=1}^{G}$ scored by $R$. Library writes are gated on
$R_{\text{res}}$ alone, so $R_{\text{comp}}$ cannot inject unsolved
trajectories into $\mathcal{L}$; it acts only on the planner.
\vspace{-1em}
\paragraph{Coupled Update Rule}
\label{sec:coevo-update}
\vspace{-0.5em}
At iteration $k$, given gap-signal-positive rollouts
$\mathcal{T}^{+}_q=\{\tau^{(i)}:R_{\text{res}}(\tau^{(i)})\geq\rho\}$,
we update $(\pi_\theta,\mathcal{L})$ simultaneously:
\begin{equation}
\label{eq:coupled-update}
\pi_{\theta_{k+1}}\!=\!\textsc{GRPO}\bigl(\pi_{\theta_k};\{(\tau^{(i)},R(\tau^{(i)}))\}\bigr),
\qquad
\mathcal{L}_{k+1}\!=\!\textsc{Fold}\bigl(\textsc{InsertTool},\,\mathcal{M}_T(\mathcal{T}^{+}_q),\,\mathcal{L}_k\bigr),
\end{equation}
where the planner update is a clipped GRPO step on
$R=R_{\text{res}}+\lambda R_{\text{comp}}$, and \textsc{Fold} applies
\textsc{InsertTool} to each candidate
$t^\star\in\mathcal{M}_T(\mathcal{T}^{+}_q)$ in sequence, performing
\textsc{Add}/\textsc{Merge}/\textsc{Reject} after spec and acyclicity
validation. The two updates jointly attack axis (B): each write
raises $\bar d(\mathcal{V}_c)$, letting a fixed planner solve the
query with fewer calls; each GRPO step shifts $\pi_\theta$ toward deeper composites, biasing
$\mathcal{M}_T(\mathcal{T}^{+}_q)$ toward reusable abstractions and
accelerating the next commit.

\textbf{Why both updates are needed.}
Updating only $\pi_\theta$ leaves $\bar d(\mathcal{V}_c)$ untouched
and the planner overfits to a fixed retrieval surface; updating only
$\mathcal{L}$ leaves the planner unable to invoke new composites.
Because Eq.~\eqref{eq:coupled-update} co-evolves both on the same gap
signal, and $(\pi_{\theta_k},\mathcal{L}_k)$ is always a feasible
fallback (clipped GRPO and \textsc{Reject} default to identity on
failed proposals), the sequence
$\{J(\pi_{\theta_k},\mathcal{L}_k)\}_k$ is monotone non-decreasing
under a small enough step size (Theorem~\ref{thm:monotone}).
\vspace{-0.5em}
\subsection{Theoretical Analysis}
\label{sec:theory}
\vspace{-0.5em}
Here we analyze our design as an objective-consistent approximation
to Eq.~\eqref{eq:objective}. We show that Typed DAG Retrieval makes
the cost block sublinear, $R_{\text{comp}}$ favors deeper compositions,
and the coupled update monotonically improves
$J(\pi_\theta,\mathcal{L})$, with the same DAG depth driving both
sides. Due to limited space, full statements and proofs are deferred
to Appendix~\ref{app:proofs}.

\begin{theorem}[Retrieval Cost Reduction]
\label{thm:retrieval}
Let $C_{\text{flat}}=\sum_{v\in\mathcal{V}}\sum_{\ell=1}^{4}c_\ell(v)$ be the
expected context cost of the flat baseline, and
$C_{\text{hier}}=\mathbb{E}[\sum_{\ell=1}^{4}\sum_{v\in\mathcal{S}_{\ell-1}}c_\ell(v)]$
that of \textsc{TypedDAGRetrieve} (Algorithm~\ref{alg:hier-retrieve}), where
$\mathcal{S}_\ell$ is the surviving set after stage $L_\ell$ and
$\mathcal{S}_0:=\mathcal{V}$. Under Assumption~\ref{asm:level-cost},
$C_{\text{hier}}\le \alpha_1\alpha_2\,C_{\text{flat}}+o(C_{\text{flat}})=o(C_{\text{flat}})$
as $n\!\to\!\infty$.
\end{theorem}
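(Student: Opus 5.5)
We prove the bound by a stage-by-stage decomposition of $C_{\text{hier}}$, charging each stage's cost only to the survivors of the previous stage. Assumption~\ref{asm:level-cost} is stated later in the paper, so we read it in the form the bound requires. We take it to say:
\begin{enumerate}
\item per-node costs $c_\ell(v)$ are bounded above and below by level constants, uniformly in $n=|\mathcal{V}|$;
\item the $L_1$ type filter keeps an expected fraction at most $\alpha_1<1$ of nodes, weighted by cost;
\item the $L_2$ description ranking keeps a fraction at most $\alpha_2<1$ of $\mathcal{S}_1$, or a fixed shortlist of size $k_2$;
\item the later survivor sets $\mathcal{S}_2$ and $\mathcal{S}_3$ have size $O(k_2)$ independent of $n$;
\item the flat cost grows linearly, i.e.\ $C_{\text{flat}}=\Theta(n)$.
\end{enumerate}
Item (2) is the usual type-selectivity condition, and item (5) follows from item (1).

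We then expand
\[
C_{\text{hier}}=\mathbb{E}\Bigl[\sum_{v\in\mathcal{V}}c_1(v)\Bigr]+\mathbb{E}\Bigl[\sum_{v\in\mathcal{S}_1}c_2(v)\Bigr]+\mathbb{E}\Bigl[\sum_{v\in\mathcal{S}_2}c_3(v)\Bigr]+\mathbb{E}\Bigl[\sum_{v\in\mathcal{S}_3}c_4(v)\Bigr]
\]
and bound each term separately.
\begin{itemize}
\item \emph{Stage $L_1$.} It is symbolic and incurs no prompt tokens. We set $c_1\equiv 0$ in the context-cost accounting. If instead the assumption assigns it a nonzero cost, we treat that cost as a lower-order additive term, since it costs no LLM tokens.
\item \emph{Stage $L_2$.} By linearity of expectation and the $\alpha_1$ selectivity, $\mathbb{E}[\sum_{v\in\mathcal{S}_1}c_2(v)]\le\alpha_1\sum_v c_2(v)$.
\item \emph{Stage $L_3$.} We write $\mathbb{E}[\sum_{\mathcal{S}_2}c_3]\le\alpha_1\alpha_2\sum_v c_3(v)$, using the $\alpha_2$ selectivity conditioned on $\mathcal{S}_1$ and the tower property. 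Alternatively, $|\mathcal{S}_2|\le k_2$ gives $O(1)$ directly.
\item \emph{Stage $L_4$.} We get $\mathbb{E}[\sum_{\mathcal{S}_3}c_4]\le k_2\max c_4=O(1)$.
\end{itemize}

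Adding the terms gives a leading contribution of $\alpha_1\alpha_2$ times part of $C_{\text{flat}}$. The remaining $L_2$ term, $\alpha_1\sum_v c_2(v)$, must then be absorbed into the $\alpha_1\alpha_2$ bound or into $o(C_{\text{flat}})$. This is the main obstacle. A naive reading gives only $\alpha_1 C_{\text{flat}}$ from the description stage, not $\alpha_1\alpha_2 C_{\text{flat}}$.

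My first attempt is to read the assumption as giving the cost-weighted product selectivity directly at $L_2$. In the lazy DAG traversal, descriptions are materialised only for the frontier of nodes reached by top-down expansion. That frontier is an $\alpha_2$ fraction of the type survivors, because children are opened only when a composite is rejected. If that reading fails, I would use the DAG's bounded fan-out and depth instead: an opened frontier has size $O(\text{fan-out}\cdot\text{depth})$, which is $o(n)$ whenever depth is $O(\log n)$. The description term is then itself $o(C_{\text{flat}})$, and the displayed bound holds a fortiori.

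Finally, the equality $\alpha_1\alpha_2C_{\text{flat}}+o(C_{\text{flat}})=o(C_{\text{flat}})$ is valid only when $\alpha_1\alpha_2\to 0$ as $n\to\infty$. Examples of this regime are type selectivity shrinking with library size, such as $|\mathcal{S}_1|=O(n^{\beta})$ with $\beta<1$, or a fixed shortlist $k_2$ giving $\alpha_2=k_2/|\mathcal{S}_1|\to 0$. I would state this regime explicitly as the condition under which the last equality holds. Otherwise the theorem yields only the constant-factor reduction $\alpha_1\alpha_2$.
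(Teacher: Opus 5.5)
\paragraph{Verdict: genuine gap at the $L_2$ term.}
Your four-stage decomposition by linearity of expectation is the same one the paper uses. The step you flag as the main obstacle, however, is a genuine gap. It comes from leaving out the first half of Assumption~\ref{asm:level-cost}: the level cost ordering $c_1(v)\ll c_2(v)\ll c_3(v)$, $c_2(v)\ll c_4(v)$.

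The paper never extracts an $\alpha_1\alpha_2$ factor from the $L_2$ stage. It proceeds as follows.
\begin{itemize}
\item It bounds every stage by maximum cost times expected survivor count: $C_{\text{hier}}\le n\bar c_1+\alpha_1 n\,c_2^{\max}+\alpha_1\alpha_2 n\,c_3^{\max}+\alpha_1\alpha_2\alpha_3 n\,c_4^{\max}$.
\item It reads $\ll$ asymptotically, as $\bar c_1/(\bar c_3+\bar c_4)\to 0$ and $c_2^{\max}/(\bar c_3+\bar c_4)\to 0$.
\item Since $C_{\text{flat}}\ge n(\bar c_3+\bar c_4)$, the entire $L_1$ and $L_2$ contribution is $o(C_{\text{flat}})$.
\item The $\alpha_1\alpha_2$ constant comes from the $L_3$ and $L_4$ stages, which are charged on $\mathcal{S}_2\supseteq\mathcal{S}_3$ with $\mathbb{E}|\mathcal{S}_2|\le\alpha_1\alpha_2 n$.
\end{itemize}
The same ordering is what licenses your treatment of the $L_1$ term as lower order.

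Your item (1) pins each $c_\ell$ between $n$-independent level constants, which excludes this route. Under your items (1)--(5) the claimed bound is in fact false. Take constant per-level costs and an $L_1$ filter keeping exactly $\alpha_1 n$ nodes. Then $C_{\text{hier}}\ge nc_1+\alpha_1 n c_2$, so $C_{\text{hier}}-\alpha_1\alpha_2 C_{\text{flat}}\ge\alpha_1 n\,[(1-\alpha_2)c_2-\alpha_2(c_3+c_4)]$. This is a $\Theta(C_{\text{flat}})$ excess whenever the bracket is positive (e.g.\ $c_2=10$, $c_3=c_4=20$, $\alpha_2=0.1$), however small $\mathcal{S}_2$ and $\mathcal{S}_3$ are. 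No argument can close this without making $c_2$ negligible relative to $c_3+c_4$, which is exactly what the cost ordering supplies.

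\paragraph{Why your fallbacks do not work.}
Neither fallback repairs the gap.
\begin{itemize}
\item The lazy-frontier reading changes the quantity being bounded. The theorem charges $c_2$ on every $v\in\mathcal{S}_1$, and Algorithm~\ref{alg:hier-retrieve} takes its Top-$k_2$ over all of $\mathcal{S}_1$. Moreover, $\alpha_2$ is the fraction of $\mathcal{S}_1$ that \emph{survives} the $L_2$ ranking, not the fraction whose descriptions are read.
\item The fan-out/depth argument needs an upper bound on fan-out and on the number of type-compatible entry points. Neither is assumed; Corollary~\ref{cor:dag} uses fan-out $\ge 2$ only as a lower bound. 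Since $\mathcal{S}_1$ may contain $\Theta(n)$ nodes in unrelated parts of the DAG, the $O(\text{fan-out}\cdot\text{depth})$ frontier has no support.
\item Your $L_3$ bound $\alpha_1\alpha_2\sum_v c_3(v)$ likewise presumes cost-weighted selectivity, which the assumption does not give. The paper uses $c_3^{\max}\,\mathbb{E}|\mathcal{S}_2|$ instead.
\end{itemize}

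\paragraph{Your closing remark is correct.}
The equality $\alpha_1\alpha_2C_{\text{flat}}+o(C_{\text{flat}})=o(C_{\text{flat}})$ does require $\alpha_1\alpha_2\to 0$. The paper's own proof reaches only $\limsup_{n\to\infty}C_{\text{hier}}/C_{\text{flat}}\le\alpha_1\alpha_2$ before asserting $o(C_{\text{flat}})$. Even that limsup relies on absorbing a max-versus-mean dispersion term into the leading constant. So stating the $\alpha_1\alpha_2\to 0$ regime explicitly is worthwhile, but the core repair you need is the cost-ordering step above.
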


\begin{corollary}[Sublinear Retrieval Time]
\label{cor:retrieval-time}
Let $\tau_\ell$ be the per-candidate wall-clock cost at $L_\ell$ with
$\tau_1\ll\tau_2,\tau_3,\tau_4$. Under Assumption~\ref{asm:level-cost} and the
inverted-index L1 (Appendix~\ref{app:proofs}),
$T_{\text{hier}}=\mathcal{O}(\tau_1\log n+(\tau_2+\tau_3+\tau_4)k_2)$, sublinear
in $n=|\mathcal{V}|$, vs.\ $T_{\text{flat}}=\Theta((\tau_2+\tau_3+\tau_4)n)$.
\end{corollary}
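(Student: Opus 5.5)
The proof splits total retrieval time into the symbolic $L_1$ stage and the three LLM-billed stages $L_2,L_3,L_4$, and bounds each separately:
\[
T_{\text{hier}} \;=\; \tau_1\,W_1(n) \;+\; \tau_2\,|\mathcal{S}_1| \;+\; \tau_3\,|\mathcal{S}_2| \;+\; \tau_4\,|\mathcal{S}_3|,
\]
where $W_1(n)$ is the number of symbolic operations spent on $L_1$.

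\textbf{The $L_1$ term.} The inverted index is keyed by canonical signature classes on the static type lattice. A sub-goal signature is resolved by a balanced-tree or sorted lookup in $\mathcal{O}(\log n)$ time. The matching postings are then returned lazily, as a pointer to a contiguous bucket, so they are not enumerated in this stage. This gives $\tau_1 W_1(n)=\mathcal{O}(\tau_1\log n)$. If the unifying signatures span several lattice classes, I will use Assumption~\ref{asm:level-cost}, which I read as bounding the number of type classes compatible with a sub-goal by a constant. The lookup then costs a constant multiple of $\log n$.

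\textbf{The $L_2$ term.} This is the main obstacle, because $|\mathcal{S}_1|$ is the output of a type filter rather than a fixed shortlist. I will invoke the same part of Assumption~\ref{asm:level-cost} that yields the $\alpha_1$ factor in Theorem~\ref{thm:retrieval}. The first route is to read the assumption as saying the expected survivor count $\mathbb{E}|\mathcal{S}_1|$ is $\mathcal{O}(k_2)$, for example through a bounded type-collision rate. Then $\mathbb{E}[\tau_2|\mathcal{S}_1|]=\mathcal{O}(\tau_2 k_2)$. If the assumption only gives $|\mathcal{S}_1|\le\alpha_1 n$ with $\alpha_1 n=o(n)$, I will fall back on the lazy DAG descent of Algorithm~\ref{alg:hier-retrieve}. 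Descriptions are ranked in batches of size $k_2$ starting from the top-level composites, and the descent stops once a certified candidate exists. This charges $L_2$ work to $\mathcal{O}(k_2)$ nodes in expectation, and the step is where the bounded-fan-out part of the assumption is needed.

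\textbf{The $L_3$ and $L_4$ terms.} These are immediate. By construction $|\mathcal{S}_2|\le k_2$, since $L_2$ keeps the top-$k_2$ shortlist, and $\mathcal{S}_3\subseteq\mathcal{S}_2$. Hence $\tau_3|\mathcal{S}_2|+\tau_4|\mathcal{S}_3|\le(\tau_3+\tau_4)k_2$.

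Summing the four terms and taking expectations gives $T_{\text{hier}}=\mathcal{O}\bigl(\tau_1\log n+(\tau_2+\tau_3+\tau_4)k_2\bigr)$. Because $k_2$ is a constant independent of $n$, this is $\mathcal{O}(\log n)=o(n)$.

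\textbf{The flat baseline.} Flat retrieval materializes every record at each of the stages $\ell=2,3,4$ for all $n$ tools. Each per-candidate cost $\tau_\ell$ is bounded below by a positive constant, so $T_{\text{flat}}$ is bounded both above and below by $(\tau_2+\tau_3+\tau_4)n$ up to constants, which gives $T_{\text{flat}}=\Theta\bigl((\tau_2+\tau_3+\tau_4)n\bigr)$ and establishes the contrast.
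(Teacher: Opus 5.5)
Your outline (stage-by-stage cost split, $\mathcal{O}(\log n)$ index lookup at L1, the $k_2$ shortlist bounding L3/L4, and every record materialized for the flat baseline) follows the paper's indexed-L1 case. The gap is the L2 term. Assumption~\ref{asm:level-cost} gives only $\mathbb{E}|\mathcal{S}_1|\le\alpha_1 n$ with $\alpha_1\in(0,1)$ a \emph{fixed} constant, so $\tau_2\,\mathbb{E}|\mathcal{S}_1|$ is linear in $n$. The paper's own intermediate bound $T_{\text{hier}}\le\tau_1\log n+(\tau_1+\tau_2)\alpha_1 n+(\tau_3+\alpha_3\tau_4)k_2$ keeps exactly this term.

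Neither of your readings is in the assumption. It says nothing about a type-collision rate, a constant number of compatible type classes (which your L1 step also relies on), or bounded fan-out. Those are new hypotheses, not readings.

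The fallback fails on three counts:
\begin{itemize}
\item Its premise $\alpha_1 n=o(n)$ is false for constant $\alpha_1$.
\item The L2 line of Algorithm~\ref{alg:hier-retrieve} ranks every $v\in\mathcal{S}_1$ in a single pass rather than descending in batches from top-level composites. The lazy expansion mentioned in the prose is not in the pseudocode or in the paper's proof.
\item Even for a lazy descent, stopping ``once a certified candidate exists'' gives no $\mathcal{O}(k_2)$ expected bound without a hypothesis on how many nodes are opened before certification.
\end{itemize}

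The paper does not derive this step from Assumption~\ref{asm:level-cost} either. It builds it into the ``inverted-index L1'' regime named in the corollary. When the sub-goal signature is fully concrete, the lookup touches only the matching bucket, and the paper asserts $\mathbb{E}|\mathcal{S}_1|=\mathcal{O}(k_2)$. Touching one bucket does not by itself bound that bucket's size, so this is effectively an extra hypothesis. You should state it explicitly as a bucket-size hypothesis of the indexed regime rather than attributing it to Assumption~\ref{asm:level-cost}, and drop the lazy-descent fallback. With that, your first route coincides with the paper's proof, including the flat-baseline lower bound. Your lazy-posting device at L1 then becomes harmless but unnecessary: the paper charges $\tau_1|\mathcal{S}_1|$ to L1 and absorbs it through the same $\mathcal{O}(k_2)$ bound.
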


\begin{remark}[Co-Evolution Tightens the Prompt Budget]
\label{rem:coevo-budget}
Since Corollary~\ref{cor:retrieval-time} is independent of which nodes
occupy the DAG, every commit by Algorithm~\ref{alg:insert-tool} replaces
a length-$m$ primitive sub-trajectory with a single composite token
(Theorem~\ref{thm:compositional}), contracting the L2/L3/L4 candidate
set of Algorithm~\ref{alg:hier-retrieve}. Co-evolution and
context-bounded retrieval are thus one mechanism: deeper
$\bar d(\mathcal{V}_c)$ both credits the policy under graph-aware GRPO
and shortens its prompt.
\end{remark}

\begin{theorem}[Compositional Advantage under Graph-Aware GRPO]
\label{thm:compositional}
Let $\tau_p$ invoke only primitives, and let $\tau_c$ replace a contiguous
length-$m$ ($m\ge 2$) sub-trajectory of $\tau_p$ by a composite
$t^\star\in\mathcal{V}_c$ with $\mathrm{flat}(t^\star)\ge m$. If
$R_{\text{res}}(\tau_p)=R_{\text{res}}(\tau_c)$ and $\lambda>0$, then
$R(\tau_c)-R(\tau_p)=\lambda\,\Phi(t^\star)\ge\lambda(m-1)>0$, so $\tau_c$
receives strictly larger group-relative advantage under GRPO.
\end{theorem}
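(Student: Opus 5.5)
The plan is a direct computation of the shaped reward $R=R_{\text{res}}+\lambda R_{\text{comp}}$ on the two trajectories. Then I will check that GRPO's group-relative normalization preserves the ordering. No earlier theorem is needed; only the definitions of $\mathrm{flat}$, $\Phi$ and $R_{\text{comp}}$ from Section~\ref{sec:problem} and Section~\ref{sec:coevo} are used.

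\textbf{Step 1: the term $R_{\text{comp}}(\tau_p)$.} Every invocation $t_i$ in $\tau_p$ is a primitive, so $\mathrm{flat}(t_i)=1$ and $\Phi(t_i)=0$. Hence $R_{\text{comp}}(\tau_p)=\sum_i\Phi(t_i)=0$.

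\textbf{Step 2: the term $R_{\text{comp}}(\tau_c)$.} Write $\tau_p=(\alpha,\beta,\gamma)$, where $\beta$ is the contiguous length-$m$ block that is replaced. Then $\tau_c=(\alpha,t^\star,\gamma)$. Since $\alpha$ and $\gamma$ still contain only primitives, they contribute $0$. Therefore $R_{\text{comp}}(\tau_c)=\Phi(t^\star)=\mathrm{flat}(t^\star)-1$. The hypothesis $\mathrm{flat}(t^\star)\ge m$ gives $\Phi(t^\star)\ge m-1$, and $m\ge2$ gives $\Phi(t^\star)\ge1$.

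\textbf{Step 3: the reward gap.} Because $R_{\text{res}}(\tau_p)=R_{\text{res}}(\tau_c)$, the verifier terms cancel. This leaves
\[
R(\tau_c)-R(\tau_p)=\lambda\,\Phi(t^\star)\ge\lambda(m-1)>0,
\]
where the last inequality uses $\lambda>0$.

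\textbf{Step 4: transfer to GRPO advantages.} This is the step that needs care. Take a single group $\{\tau^{(i)}\}_{i=1}^G$ for the same $q$ that contains both trajectories. The group-relative advantage is
\[
A^{(i)}=\frac{R(\tau^{(i)})-\bar R}{\sigma_R+\epsilon}.
\]
The mean $\bar R$ and the scale $\sigma_R+\epsilon$ are shared by all members of the group, and the scale is positive. So $A^{(i)}$ is a strictly increasing affine function of $R(\tau^{(i)})$.

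\begin{itemize}
\item The denominator is positive even without $\epsilon$: the group contains two distinct rewards, so $\sigma_R>0$.
\item If no normalization is used, the same argument applies with unit scale.
\item Clipping in GRPO acts on the probability ratio, not on $A$, so it does not affect the ordering of advantages.
\end{itemize}

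Hence $A(\tau_c)>A(\tau_p)$. The main subtlety is that the comparison must be made inside one group: advantages from different groups are not comparable. I would state that assumption explicitly.
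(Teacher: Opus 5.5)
Your proposal is correct and matches the paper's proof. It computes $R_{\text{comp}}(\tau_p)=0$ and $R_{\text{comp}}(\tau_c)=\Phi(t^\star)=\mathrm{flat}(t^\star)-1\ge m-1$, cancels the equal verifier terms, and uses that $R\mapsto(R-\mu)/\sigma$ is strictly increasing within a single group; the paper additionally sketches a first-order policy-gradient step showing the log-odds of $\tau_c$ over $\tau_p$ increase, which the statement does not require, and your explicit justification of $\sigma>0$ and of the same-group requirement is slightly more careful than the paper's.
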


\begin{theorem}[Monotone Co-Evolution]
\label{thm:monotone}
Let $(\pi_{\theta_k},\mathcal{L}_k)$ be the planner--library pair after the
$k$-th update in Stage 3 of Algorithm~\ref{alg:todo-main}, and
$J(\pi,\mathcal{L})=\mathbb{E}_{q,\tau}[R_{\text{res}}+\lambda R_{\text{comp}}]$.
Under Assumptions~\ref{asm:verifier}--\ref{asm:retrieval}, there exists
$\bar\eta>0$ such that for every GRPO learning rate $\eta\in(0,\bar\eta]$,
$J(\pi_{\theta_k},\mathcal{L}_k)$ is monotonically non-decreasing in $k$.
\end{theorem}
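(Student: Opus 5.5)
The plan is to split each iteration $k\to k+1$ into two half-steps: a library step $(\pi_{\theta_k},\mathcal{L}_k)\to(\pi_{\theta_k},\mathcal{L}_{k+1})$, then a planner step $(\pi_{\theta_k},\mathcal{L}_{k+1})\to(\pi_{\theta_{k+1}},\mathcal{L}_{k+1})$. I would show that neither half-step decreases $J$, so that
\[
J(\pi_{\theta_{k+1}},\mathcal{L}_{k+1})\ge J(\pi_{\theta_k},\mathcal{L}_{k+1})\ge J(\pi_{\theta_k},\mathcal{L}_k).
\]
Since the theorem specifies \textsc{TypedDAGRetrieve}, I treat retrieval $\mathcal{C}_q(\cdot)$ as a deterministic map of the library. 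I use Assumptions~\ref{asm:verifier}--\ref{asm:retrieval} in the forms I expect them to take.
\begin{itemize}
\item The verifier $R_{\text{res}}$ is deterministic and bounded in $[0,1]$.
\item $R_{\text{comp}}$ is bounded on the support of $\pi_\theta$, since trajectory length and flat sizes are bounded.
\item The policy is $L$-smooth in $\theta$.
\item Retrieval is monotone or conservative: adding a validated composite never removes from $\mathcal{S}_4$ a tool that previously solved $q$. Equivalently, every behavior available under $\mathcal{L}_k$ remains available under $\mathcal{L}_{k+1}$, with the same or a larger value.
\end{itemize}

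\textbf{Library step.} \textsc{Fold} is a finite sequence of \textsc{Add}, \textsc{Merge} and \textsc{Reject} operations. \textsc{Reject} is the identity. \textsc{Add} and \textsc{Merge} keep $\mathcal{V}_k\subseteq\mathcal{V}_{k+1}$ up to merged aliases, preserve acyclicity and child specifications, and leave $\mathrm{flat}$ unchanged on existing nodes, because flat size is computed from descendants only and new nodes are never descendants of old ones. By induction over the fold, it then suffices to handle one commit.

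For a single commit, the retrieval assumption gives a coupling. Every trajectory $\tau$ under $\mathcal{L}_k$ has a counterpart under $\mathcal{L}_{k+1}$ that is either identical or substitutes the new composite for a spec-certified primitive sub-trajectory. In the substituted case, the $L_3$ certification used in Theorem~\ref{thm:compositional} gives equal $R_{\text{res}}$, and Theorem~\ref{thm:compositional} itself gives $R_{\text{comp}}$ no smaller. Taking expectations yields the first inequality.

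The delicate point is that the planner's distribution $\pi_{\theta_k}(\cdot\mid q,\mathcal{C}_q(\mathcal{L}_{k+1}))$ itself changes with the prompt. I would absorb this into the retrieval assumption, reading it as "the induced expected reward under the new candidate set is no smaller." I would state explicitly that this is where the conservative-update hypothesis is used, and note that \textsc{Reject} guarantees a fallback of equality.

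\textbf{Planner step.} With $\mathcal{L}_{k+1}$ fixed, $J(\cdot,\mathcal{L}_{k+1})$ is an ordinary expected-return objective. The clipped GRPO step is, to first order, $\eta$ times a direction $g_k$ with $\langle\nabla J,g_k\rangle\ge c\|g_k\|^2\ge0$. This uses the fact that the group-mean baseline leaves the expectation unbiased, and the shaping reward is exactly $R_{\text{res}}+\lambda R_{\text{comp}}$, which is $J$'s own integrand. The descent lemma then gives
\[
J(\theta_k+\eta g_k)\ge J(\theta_k)+\eta\langle\nabla J,g_k\rangle-\tfrac{L\eta^2}{2}\|g_k\|^2\ge J(\theta_k),
\]
valid for $\eta\le\bar\eta:=2c/L$. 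Bounded rewards give uniform bounds on the smoothness constant and on $\|g_k\|$, so $\bar\eta$ is independent of $k$.

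\textbf{Main obstacle.} I expect the hardest part to be the alignment between the stochastic, clipped GRPO update and the true gradient. Sampled group advantages need not yield an ascent direction on every realization. My first attempt would be to read the monotonicity claim in expectation over rollout sampling, or to rely on the clipping and trust-region fallback ("clipped GRPO defaults to identity on failed proposals"). Under the fallback reading, a step that would decrease the surrogate is rejected, which makes the planner half-step non-decreasing by construction. Combining this with the library half-step gives the theorem.
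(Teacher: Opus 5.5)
Your library half-step is essentially the paper's argument for $\Delta_k^{(\mathcal{L})}$: vertex inclusion from Assumption~\ref{asm:verifier}, a split into legacy and substituted trajectories, and $L_3$-certified substitution plus Theorem~\ref{thm:compositional}. Like the paper, you have to read Assumption~\ref{asm:retrieval} as a statement about expected return under the new prompt, not mere set inclusion.

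The genuine gap is the order in which you split the iteration. In Stage~3 of Algorithm~\ref{alg:todo-main} (and in Eq.~\eqref{eq:coupled-update}), the rollouts that define the GRPO step are drawn from $\pi_{\theta_k}(\cdot\mid q,\mathcal{C}_q(\mathcal{L}_k))$. The library write is computed from those same rollouts. So $g_k$ is an estimated ascent direction for $J(\cdot,\mathcal{L}_k)$, not for $J(\cdot,\mathcal{L}_{k+1})$.

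Your planner half-step asserts $\langle\nabla_\theta J(\theta_k,\mathcal{L}_{k+1}),g_k\rangle\ge c\|g_k\|^2$ because the shaped reward is $J$'s integrand. But the two objectives differ through the conditioning context in every factor $\pi_\theta(\cdot\mid q,\mathcal{C}_q)$. Nothing in the assumptions aligns their gradients, so the step can lower $J(\cdot,\mathcal{L}_{k+1})$.

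The repair is to swap the half-steps, which is exactly the paper's telescoping~\eqref{eq:telescope}. First show $\Delta_k^{(\pi)}=J(\pi_{\theta_{k+1}},\mathcal{L}_k)-J(\pi_{\theta_k},\mathcal{L}_k)\ge 0$, where GRPO ascends the objective its rollouts came from. Then show $\Delta_k^{(\mathcal{L})}=J(\pi_{\theta_{k+1}},\mathcal{L}_{k+1})-J(\pi_{\theta_{k+1}},\mathcal{L}_k)\ge 0$. Your coupling argument uses nothing about the policy except that it is held fixed, so it transfers verbatim to $\pi_{\theta_{k+1}}$.

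There are two smaller issues in the planner step itself. First, your $\bar\eta=2c/L$ needs an alignment constant $c>0$ that is uniform in $k$, and bounded rewards do not supply one for a stochastic group-advantage direction. In expectation the descent lemma gives $\mathbb{E}[J(\theta_k+\eta g_k)]\ge J(\theta_k)+\eta\|\nabla J\|^2-\tfrac{L\eta^2}{2}\mathbb{E}\|g_k\|^2$. This is non-negative only for $\eta\le 2\|\nabla J\|^2/(L\,\mathbb{E}\|g_k\|^2)$, a threshold that collapses near stationary points. The paper instead invokes the Kakade--Langford trust-region improvement bound, with $\bar\eta_1$ tied to a KL Lipschitz constant, and claims the improvement in expectation. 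Either way this is an extra regularity hypothesis, not a consequence of Assumptions~\ref{asm:verifier}--\ref{asm:retrieval}, and you should state it as such.

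Second, the ``fallback'' reading cannot carry the step. Clipping bounds the importance ratio but never rejects an update that lowers the surrogate. Even a non-decreasing surrogate would not give a non-decreasing $J$ without the KL penalty term of a trust-region bound.
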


\begin{corollary}[DAG Well-Formedness and Complexity]
\label{cor:dag}
For all $k\ge 0$, $\mathcal{G}_k$ from Algorithm~\ref{alg:insert-tool} is
acyclic with $d(v)=1+\max_{u\in\mathrm{Ch}(v)}d(u)$ for every
$v\in\mathcal{V}_c$; max depth is $\mathcal{O}(\log|\mathcal{V}|)$ when
fan-out $\ge 2$. \textsc{InsertTool} costs
$\mathcal{O}(|\mathrm{Ch}(t^\star)|+|\mathcal{E}|)$ per insertion plus
$\mathcal{O}(1)$ amortized indexing.
\end{corollary}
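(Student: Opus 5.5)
The proof proceeds by induction on the number $k$ of committed insertions, carrying an invariant that already contains all three structural claims: $\mathcal{G}_k$ is acyclic, every composite $v\in\mathcal{V}_c$ satisfies $d(v)=1+\max_{u\in\mathrm{Ch}(v)}d(u)$, and every primitive has $d(v)=0$. The base case $\mathcal{G}_0$ consists of primitives only, so it has no edges and all depths are $0$. For the inductive step I analyse the three atomic operators separately. \textsc{Reject} is the identity and preserves the invariant. \textsc{Add} introduces a new node $t^\star$ whose children $\mathrm{Ch}(t^\star)$ already lie in $\mathcal{V}_k$, and only edges out of $t^\star$ are added. A cycle in $\mathcal{G}_{k+1}$ would therefore have to pass through $t^\star$, which requires an edge into $t^\star$. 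No such edge exists, because no existing node's body invokes a tool that did not yet exist. Even without this argument, the explicit acyclicity check of Algorithm~\ref{alg:insert-tool} rejects any such proposal. The depth $d(t^\star)$ is then computed by Algorithm~\ref{alg:insert-tool} from the already-correct depths of its children. Existing nodes gain no new children, so their depths are unchanged. \textsc{Merge} rewrites an existing entry, which may change its child set. Here I rely on the acyclicity check performed before the commit, and then recompute depths along ancestors in reverse topological order so the recursion holds again.

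For the logarithmic depth bound I would argue through flat size. From $\mathrm{flat}(v)=\sum_{u\in\mathrm{Ch}(v)}\mathrm{flat}(u)$ and fan-out $\ge 2$, induction on depth gives $\mathrm{flat}(v)\ge 2^{d(v)}$: the deepest child has $\mathrm{flat}\ge 2^{d(v)-1}$, and at least one other child contributes $\ge 2^{d(v)-1}$ if we additionally use balanced children. This step is the main obstacle, because fan-out $\ge 2$ alone only gives $\mathrm{flat}(v)\ge d(v)+1$; a chain in which each composite has one deep child and one primitive child has linear depth. I would therefore state the bound under the interpretation that every composite node of depth $d$ has at least two children of depth $d-1$, i.e. a complete binary-branching lower structure. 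Under that interpretation the subtree below $v$ contains at least $2^{d(v)}$ distinct... nodes only if children are distinct rather than shared. Since the DAG may share them, I would instead count distinct nodes per depth level. Each level $j<d$ must contain at least one node, and under the branching assumption at least two nodes, so the depth-level sets partition $\mathcal{V}$. Concretely, I would first try the assumption that the number of nodes at depth $j$ is at least twice the number at depth $j+1$. Then $|\mathcal{V}|\ge 2^{d_{\max}}$ and $d_{\max}=\mathcal{O}(\log|\mathcal{V}|)$, and I would make this level-doubling hypothesis explicit in the appendix statement.

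The complexity claim follows from the operations of Algorithm~\ref{alg:insert-tool}:
\begin{itemize}
\item Computing $d(t^\star)$ and $\mathrm{flat}(t^\star)$ costs $\mathcal{O}(|\mathrm{Ch}(t^\star)|)$, since the children's values are cached.
\item The acyclicity check is a DFS/BFS reachability test from the children back to $t^\star$, which costs at most $\mathcal{O}(|\mathcal{V}|+|\mathcal{E}|)$. Since every non-isolated node touches an edge, this can be stated as $\mathcal{O}(|\mathcal{E}|)$ after absorbing isolated primitives, which are never traversed.
\item Registering the signature $L_1(t^\star)$ in the inverted type index is a hash insertion, which is $\mathcal{O}(1)$ amortized under standard dynamic-table doubling.
\end{itemize}
Summing these three contributions gives the stated per-insertion bound.
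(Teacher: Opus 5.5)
Your treatment of acyclicity, the depth recurrence and the insertion cost matches the paper's argument. It runs an induction over commits using the acceptance check of Algorithm~\ref{alg:insert-tool}, assigns each depth once at insertion from the children's depths, bounds the cycle check by a DFS from $\mathrm{Ch}(t^\star)$ that visits at most $|\mathrm{Ch}(t^\star)|+|\mathcal{E}|$ nodes, and uses a hash insertion for the index. Your remark that an \textsc{Add} only creates out-edges from a fresh vertex, so the cycle check is vacuous there, sharpens the paper's proof. The \textsc{Merge} case you worry about does not arise in the paper. A merge only appends $L_2/L_4$ metadata to an existing node and creates no vertex or edge, so child sets and depths are never revised and your ancestor recomputation is unnecessary.

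On the logarithmic depth bound your skepticism is correct, and it pins down the flaw in the paper's own proof. The paper argues that every composite of depth $d$ has two children of depth $\le d-1$, hence $\mathcal{G}_v$ has at least $2^{d(v)}$ primitive leaves, hence $d(v)\le\log_2|\mathcal{V}_p|$. Both inferences fail.
\begin{itemize}
\item Depth $\le d-1$ guarantees only one child of depth $d-1$. Your chain $v_j=(v_{j-1},p_1)$ has fan-out $2$ and depth $|\mathcal{V}|-2$.
\item Even with two children of depth exactly $d-1$, the $2^{d(v)}$ leaves are counted with multiplicity; that count is $\mathrm{flat}(v)$. It says nothing about $|\mathcal{V}_p|$ once nodes are shared. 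Over primitives $p_1,p_2$, take $a_1=(p_1,p_2)$, $b_1=(p_2,p_1)$, $a_j=(a_{j-1},b_{j-1})$ and $b_j=(b_{j-1},a_{j-1})$. Then $\mathrm{flat}(a_d)=2^d$, yet $d=(|\mathcal{V}|-2)/2$.
\end{itemize}
So the statement as written is false, no argument from fan-out $\ge 2$ can close it, and adding a hypothesis is the right move.

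The hypothesis you chose, however, is a poor one.
\begin{itemize}
\item Level-doubling, $n_j\ge 2n_{j+1}$, forces $\sum_{j\ge 1}n_j<n_0$, i.e.\ $|\mathcal{V}_c|<|\mathcal{V}_p|$. That cannot persist when the primitive set is frozen and composites keep being committed.
\item The paper's own end-of-training libraries violate it: on GSM8K there are $18$ primitives and $291$ composites (Table~\ref{tab:lib-stats}).
\item It is also little more than the conclusion restated.
\end{itemize}
What your flat-size argument actually proves is a local statement. If every composite invokes, with multiplicity, at least two children of depth exactly $d(v)-1$, then $\mathrm{flat}(v)\ge 2^{d(v)}$, i.e.\ $d(v)\le\log_2\mathrm{flat}(v)$. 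Depth is then logarithmic in the primitive expansion size, not in $|\mathcal{V}|$. In terms of library size, the only unconditional bound is $\max_v d(v)\le|\mathcal{V}_c|$, since every depth level from $1$ up to the maximum contains a composite.
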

\vspace{-1em}
\section{Experiments}
\label{sec:experiments}
\vspace{-0.5em}
We evaluate \textit{CoCoDA} on a cloud VM with $4$ NVIDIA H200 GPUs. Qwen3-32B serves as the teacher for tool extraction and solution generation; smaller students are trained via GRPO. Hyperparameters and reward weights are in the appendix. Source code: \url{https://anonymous.4open.science/r/CoCoDA-664C}.
\subsection{Experimental Settings}
\vspace{-0.5em}
\textbf{Datasets.} We use six benchmarks across three categories with deterministic code-based verification: math/logical reasoning (GSM8K~\citep{cobbe2021gsm8k}, MATH~\citep{hendrycks2021math}), tabular analysis (WikiTableQuestions~\citep{pasupat2015wikitable}, FinQA~\citep{chen2021finqa}), and code task (EvalPlus~\citep{liu2023evalplus}, MBPP~\citep{austin2021mbpp}). Statistics are in the appendix.

\textbf{Comparison Methods.} We compare against two no-tool CoT baselines (Qwen3 student and Qwen3-32B teacher~\citep{wei2023chain}), three fixed-library tool-augmented methods (ReAct~\citep{react2023}, ToRL~\citep{torl2024}, ReTool~\citep{retool2025}), and two library-learning methods (CREATOR~\citep{creator2023}, TroVE~\citep{trove2024}). All baselines share the student backbone and task prompts. Component-level ablations are in Section~\ref{sec:ablation}.
\vspace{-0.5em}
\subsection{Main Results.}
\vspace{-0.5em}
Table~\ref{tab:main_results} reports accuracy on six benchmarks across four Qwen3 student sizes ($0.6$B--$8$B), with the Qwen3-32B teacher as an upper bound. \textit{CoCoDA} wins every (size, benchmark) cell and beats the strongest tool-augmented baseline, ReTool, by up to $2$ points. Gains over vanilla CoT are largest at the smallest scale ($+10.7$ on GSM8K, $+10.8$ on MBPP at 0.6B), showing that the co-evolved library transfers teacher competence most effectively when student capacity is scarce. The teacher gap closes with scale: at 8B, \textit{CoCoDA} \emph{matches or exceeds} the 32B teacher on math/logic (GSM8K $93.67$ vs.\ $93.40$, MATH $63.18$ vs.\ $61.62$;~$\star$ in Table~\ref{tab:main_results}), recovering teacher-level performance at roughly a quarter of the parameters. Comparable margins on symbolic (GSM8K, MATH) and retrieval-heavy (WTQ, FinQA) tasks indicate the library generalizes across reasoning styles rather than overfitting one skill. In contrast, library-construction baselines without a tight student--teacher loop (CREATOR, TroVE) \emph{underperform} plain CoT on math and code, confirming tool creation helps only when the library co-evolves with the learner.
\vspace{-1em}
\begin{table}[htbp]
\centering
\small
\setlength{\tabcolsep}{5pt}
\renewcommand{\arraystretch}{1.1}
\vspace{-1.5em}
\caption{Main results across three task categories and six benchmarks, reported for 
four student sizes of the Qwen3 family. Within each student-size block, \textbf{bold} denotes the best result 
and \underline{underline} the second-best. CoCoDA entries that match or exceed the teacher are additionally marked 
with~$\star$.}
\label{tab:main_results}
\begin{tabular}{l l cc cc cc}
\toprule
 & & \multicolumn{2}{c}{\textbf{Math / Logical}} 
 & \multicolumn{2}{c}{\textbf{Tabular Analysis}} 
 & \multicolumn{2}{c}{\textbf{Code Task}} \\
\cmidrule(lr){3-4} \cmidrule(lr){5-6} \cmidrule(lr){7-8}
\textbf{Size} & \textbf{Method}
 & GSM8K & MATH 
 & WTQ   & FinQA 
 & EP & MBPP \\
\midrule
32B & Teacher (CoT)        & 93.40 &  61.62 & 51.9 & 48.5 & 82.3 & 78.20 \\
\midrule
\multirow{7}{*}{0.6B}
    & Student (CoT)                  & 59.59 &  32.44 & 17.98 & 5.93 & 19.5 & 36.60 \\
    & ReAct~\citep{react2023}         & 52.13 & 28.18 & 15.57 & 7.65 & 17.60 & 32.11 \\
    & ToRL~\citep{torl2024}           & 65.75 & 36.24 & 22.59 & 8.55 & 24.06 & 42.92 \\
    & ReTool~\citep{retool2025}       & 68.85 & 39.65 & 24.22 & 10.55 & 27.78 & 46.30 \\
    & CREATOR~\citep{creator2023}     & 48.60 & 20.93 & 12.77 & 7.89 & 15.33 & 28.92 \\
    & TroVE~\citep{trove2024}         & 50.74 & 23.85 & 13.86 & 8.49 & 16.62 & 30.16 \\
    \rowcolor{gray!20}
    & \textbf{CoCoDA (Ours)}           & \textbf{70.27} & \textbf{40.13} & \textbf{25.68} & \textbf{11.24} & \textbf{28.47} & \textbf{47.38} \\
\midrule
\multirow{7}{*}{1.7B}
    & Student (CoT)                  & 75.44 & 43.50 & 30.32 & 16.04 & 39.6 & 55.40  \\
    & ReAct~\citep{react2023}         & 68.28 & 38.75 & 26.77 & 16.47 & 35.44 & 50.69 \\
    & ToRL~\citep{torl2024}           & 79.99 & 47.41 & 34.61 & 18.97 & 43.30 & 59.63 \\
    & ReTool~\citep{retool2025}       & 82.33 & 50.49 & 37.59 & 20.09 & 46.80 & 62.22 \\
    & CREATOR~\citep{creator2023}     & 60.88 & 25.16 & 18.26 & 16.72 & 30.76 & 42.01 \\
    & TroVE~\citep{trove2024}         & 62.27 & 29.66 & 20.17 & 18.66 & 32.51 & 45.64 \\
    \rowcolor{gray!20}
    & \textbf{CoCoDA (Ours)}           & \textbf{84.36} & \textbf{51.62} & \textbf{38.05} & \textbf{21.37} & \textbf{47.29} & \textbf{63.85} \\
\midrule
\multirow{7}{*}{4B}
    & Student (CoT)                  & 87.79 & 54.10 & 41.67 & 24.93 & 60.4 & 67.00  \\
    & ReAct~\citep{react2023}         & 82.71 & 49.48 & 36.80 & 25.82 & 56.84 & 62.95 \\
    & ToRL~\citep{torl2024}           & 89.16 & 56.65 & 44.77 & 26.17 & 62.25 & 69.68 \\
    & ReTool~\citep{retool2025}       & 91.05 & 58.41 & 46.72 & 28.95 & 64.82 & 72.27 \\
    & CREATOR~\citep{creator2023}     & 70.35 & 30.43 & 25.86 & 25.94 & 54.98 & 54.84 \\
    & TroVE~\citep{trove2024}         & 72.36 & 34.16 & 27.62 & 26.88 & 56.26 & 56.12 \\
    \rowcolor{gray!20}
    & \textbf{CoCoDA (Ours)}           & \textbf{92.64} & \textbf{59.37} & \textbf{46.83} & \textbf{29.06} & \textbf{66.28} & \textbf{73.42} \\
\midrule
\multirow{7}{*}{8B}
    & Student (CoT)                  & 89.84 & 60.80 & 47.38 & 37.31 & 70.1 & 69.80 \\
    & ReAct~\citep{react2023}         & 86.84 & 55.47 & 42.64 & 34.69 & 63.52 & 66.54 \\
    & ToRL~\citep{torl2024}           & 91.83 & 61.26 & 48.89 & 38.19 & 72.19 & 72.88 \\
    & ReTool~\citep{retool2025}       & 92.82 & 61.84 & 50.36 & 39.65 & 74.35 & 75.38 \\
    & CREATOR~\citep{creator2023}     & 80.86 & 36.18 & 30.42 & 27.29 & 58.40 & 58.42 \\
    & TroVE~\citep{trove2024}         & 82.40 & 40.57 & 32.20 & 28.90 & 60.10 & 60.77 \\
    \rowcolor{gray!20}
    & \textbf{CoCoDA (Ours)}           & \textbf{93.67}$^{\star}$ & \textbf{63.18}$^{\star}$ & \textbf{51.54} & \textbf{40.16} & \textbf{75.26} & \textbf{76.39} \\
\bottomrule
\end{tabular}
\vspace{-1.5em}
\end{table}

% \vspace{-0.5em}
\subsection{Scalability and Efficiency Analysis}
\vspace{-0.5em}
\paragraph{Library Growth.}
Figure~\ref{fig:library_growth} tracks library size and mean compositional depth of 
invoked tools over GRPO training. Both rise sharply in the first $\sim$150 steps and 
saturate, showing that \textit{CoCoDA} 
commits to a compact set of reusable abstractions rather than growing unboundedly. 
Asymptotic size varies with task complexity (larger for symbolic math than code), 
while depth stabilizes near $3$--$4$, consistent with Corollary~\ref{cor:dag}.
\vspace{-1em}
\paragraph{Context Cost vs.\ Library Size: Code Hierarchy vs.\ Text Hierarchy.}
TDR's novelty lies not in hierarchy itself but in exploiting properties unique to executable code.
Fixing the library at the full-\textit{CoCoDA} snapshot, we swap only the retrieval substrate and
measure mean prompt-token cost for a top-$k$ set as library size sweeps from $50$ to $1{,}600$ nodes
(Figure~\ref{fig:retrieval_cost}), at matched 4B-student accuracy. \textbf{Flat retrieval} embeds and
ranks all four annotation levels per node (no hierarchy; typical prior baseline). \textbf{Text-hierarchical
RAG} builds a RAPTOR-style summary tree over L2 descriptions (generic hierarchy). \textbf{Typed DAG
Retrieval (ours)} traverses the call-induced DAG (Algorithm~\ref{alg:hier-retrieve}). At $1{,}600$ nodes,
flat retrieval costs $\approx\!11.4\times$ TDR's tokens and text-hierarchical RAG $\approx\!4.7\times$,
since its topic-clustered tree cannot prune by typed signature or contract on abstracted sub-trajectories.
This gap is the empirical content of the three code-specificity claims (typed prune, edge-guided
expansion, behavior-preserving leaf substitution) of Section~\ref{sec:hier-index}.
\begin{figure}[htbp]
\centering
\begin{subfigure}[t]{0.6\linewidth}
\centering
\includegraphics[width=\linewidth]{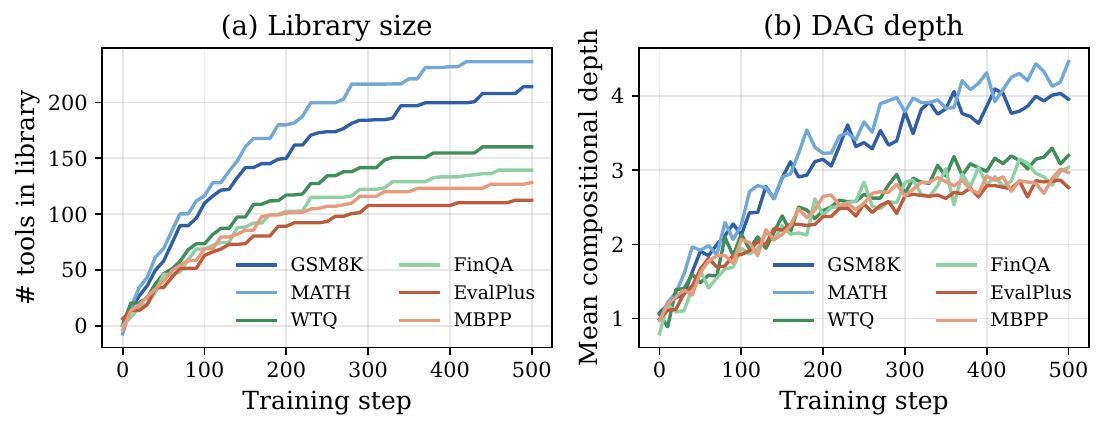}
\caption{Library co-evolution dynamics}
\label{fig:library_growth}
\end{subfigure}
\hfill
\begin{subfigure}[t]{0.35\linewidth}
\centering
\includegraphics[width=\linewidth]{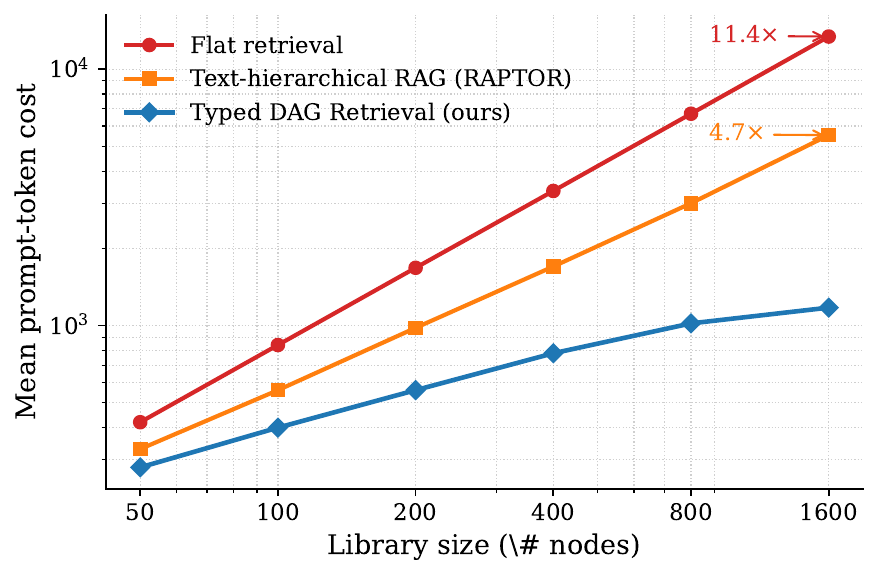}
\caption{Retrieval context-cost scaling}
\label{fig:retrieval_cost}
\end{subfigure}
\caption{\textbf{Left.} Co-evolution dynamics of the tool library under GRPO training. \textbf{Right.} 
Mean prompt-token cost to surface a top-$k$ tool set as library size grows.}
\label{fig:library_and_retrieval}
\end{figure}
\vspace{-1.1em}
\paragraph{Scaling Analysis.}
Figure~\ref{fig:scaling} sweeps library size with the 4B student fixed (one panel per benchmark). Accuracy rises steeply over the first $\sim$200 tools and plateaus by $\sim$400, while latency grows roughly linearly, so $400$ tools dominate $800$ on both accuracy and cost. Sweeping the student with the full library fixed yields monotone GSM8K gains from $0.6$B to $8$B, but the $4$B$\to$$8$B gain is only $+1.0$ while latency nearly doubles—$4$B is the accuracy--cost knee.
{\setlength{\intextsep}{2pt}%
\setlength{\floatsep}{2pt}%
\setlength{\textfloatsep}{2pt}%
\begin{figure}[!htbp]
\centering
\vspace{-1em}
\includegraphics[width=0.7\linewidth]{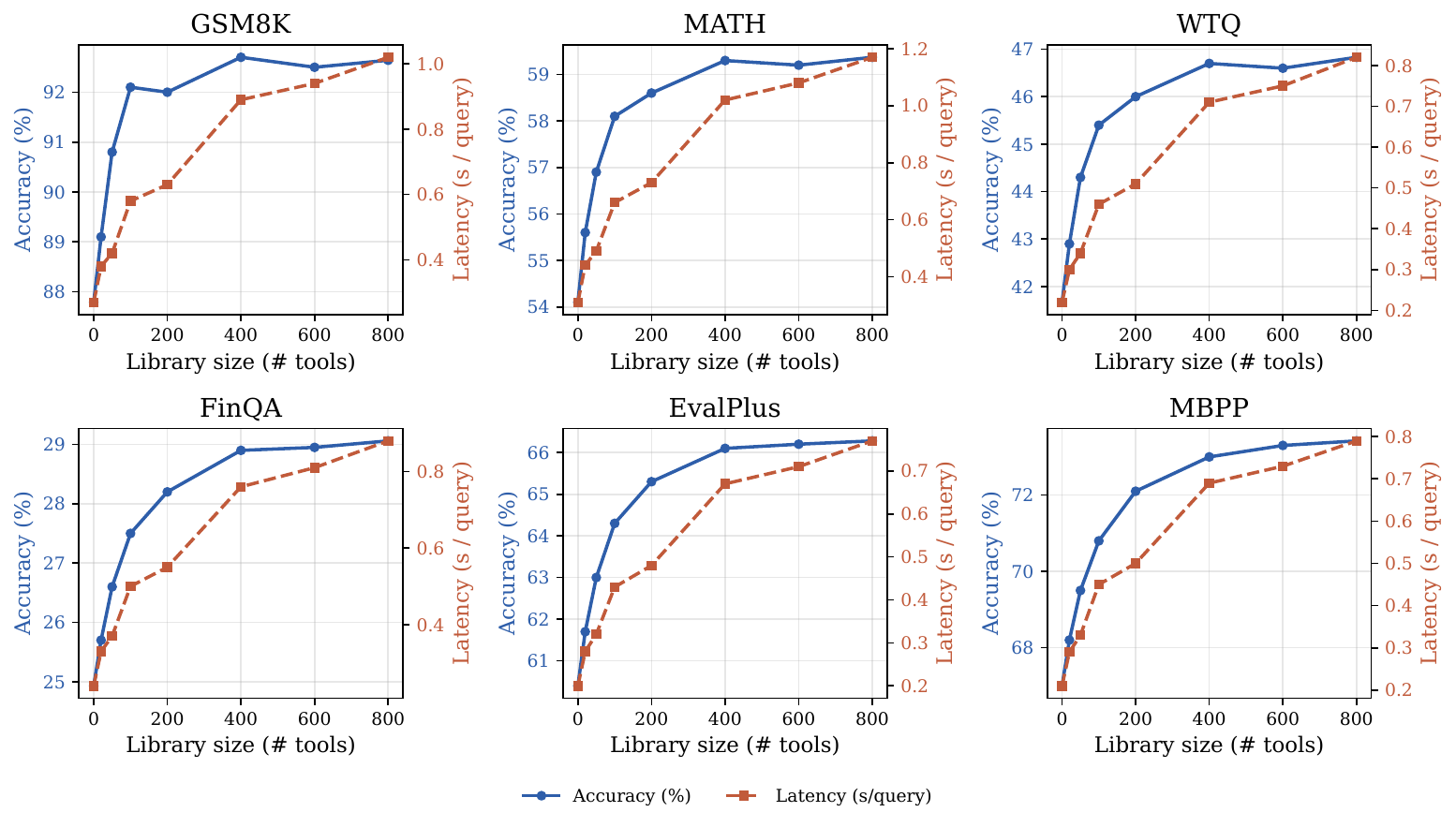}
\vspace{-0.8em}
\caption{Accuracy and per-query latency as a function of library size at a
fixed $4$B student.}
\label{fig:scaling}
\vspace{-1.5em}
\end{figure}}
\vspace*{-1em}
\subsection{Ablation Studies}
\label{sec:ablation}
\vspace{-0.5em}
We ablate \textit{CoCoDA}'s three components: the \emph{Compositional Tool DAG} (CTD,
reusable tools with dependency edges); \emph{Typed DAG Retrieval} (TDR, typed pruning
with edge-guided expansion and example disambiguation); and the \emph{graph-aware reward}
(GAR, a structural credit term for tools on verified DAG paths). Each variant retrains
the 4B student under identical hyperparameters, replacing only the ablated component with
its non-structural counterpart (flat library, dense retrieval, plain reward); see
Table~\ref{tab:ablation}. Removing CTD causes the largest drop ($2$--$4$ points everywhere):
without dependency edges the planner re-derives intermediates and the library collapses
to a CREATOR/TroVE-style flat collection. Disabling TDR yields a milder drop concentrated
on tabular tasks (WTQ, FinQA) where the candidate set is largest; this reduces to flat
dense retrieval—the configuration to which text-style hierarchical RAG and similarity-clustered
skill memories collapse once code-specific structure is removed, so the $-1.79$ mean loss
isolates what \emph{code-specific} hierarchy buys beyond any text/skill-hierarchy substrate.
Removing GAR weakens credit assignment, costing $1$--$2$ points uniformly, most visibly on
MATH and MBPP (longest tool chains). Disabling all three barely beats plain CoT, confirming
the components are complementary.

\vspace{-1em}
\begin{table}[htbp]
\centering
\small
\setlength{\tabcolsep}{5pt}
\renewcommand{\arraystretch}{1.1}
% \vspace{-2em}
\setlength{\abovecaptionskip}{2pt}
\setlength{\belowcaptionskip}{2pt}
\caption{Ablation of \textit{CoCoDA}'s three core components on the 4B Qwen3 student.}
\label{tab:ablation}
\begin{tabular}{l cc cc cc c}
\toprule
 & \multicolumn{2}{c}{\textbf{Math / Logical}}
 & \multicolumn{2}{c}{\textbf{Tabular Analysis}}
 & \multicolumn{2}{c}{\textbf{Code Task}}
 & \\
\cmidrule(lr){2-3} \cmidrule(lr){4-5} \cmidrule(lr){6-7}
\textbf{Variant}
 & GSM8K & MATH
 & WTQ   & FinQA
 & EP & MBPP
 & $\Delta$ \\
\midrule
\rowcolor{gray!20}
\textbf{CoCoDA (full)}            & \textbf{92.64} & \textbf{59.37} & \textbf{46.83} & \textbf{29.06} & \textbf{66.28} & \textbf{73.42} & ---    \\
\midrule
w/o CTD (flat library)          & 89.12 & 55.74 & 43.51 & 26.18 & 62.83 & 69.47 & $-3.49$ \\
w/o TDR (flat dense retrieval)  & 91.08 & 57.92 & 43.86 & 26.61 & 64.73 & 71.58 & $-1.79$ \\
w/o GAR (exec-only reward)      & 91.32 & 57.48 & 45.27 & 27.74 & 65.16 & 71.82 & $-1.40$ \\
\midrule
w/o CTD + TDR                   & 88.47 & 54.63 & 41.92 & 25.38 & 61.94 & 68.52 & $-4.64$ \\
w/o CTD + TDR + GAR             & 87.96 & 54.27 & 41.38 & 24.83 & 61.27 & 67.86 & $-5.18$ \\
\bottomrule
\end{tabular}
\vspace{-1.5em}
\end{table}
% \vspace{-0.em}
\section{Conclusion}
\vspace{-0.5em}
This work addresses a key limitation of small language models equipped with external code libraries: the library must co-evolve with the policy without exhausting the planner's context budget. We presented \textit{CoCoDA}, which resolves both faces through a single \emph{compositional code DAG} that supports graph-aware GRPO co-evolution and Typed DAG Retrieval, exploiting properties only executable code admits. Experiments on GSM8K and MATH show that an 8B student matches or exceeds its 32B teacher, with the largest gains at the smallest student sizes. Future work will extend CoCoDA to multimodal tool-use and multi-agent settings, while improving composite compression for large evolving libraries.

% \section*{References}

\bibliographystyle{plainnat}  % or unsrtnat, abbrvnat
\bibliography{references}      % no .bib extension

@inproceedings{toolformer2023,
  title     = {Toolformer: Language Models Can Teach Themselves to Use Tools},
  author    = {Schick, Timo and Dwivedi-Yu, Jane and Dess{\`i}, Roberto and Raileanu, Roberta and Lomeli, Maria and Hambro, Eric and Zettlemoyer, Luke and Cancedda, Nicola and Scialom, Thomas},
  booktitle = {Advances in Neural Information Processing Systems (NeurIPS)},
  year      = {2023}
}

@inproceedings{react2023,
  title     = {{ReAct}: Synergizing Reasoning and Acting in Language Models},
  author    = {Yao, Shunyu and Zhao, Jeffrey and Yu, Dian and Du, Nan and Shafran, Izhak and Narasimhan, Karthik and Cao, Yuan},
  booktitle = {International Conference on Learning Representations (ICLR)},
  year      = {2023}
}

@inproceedings{toolllm2024,
  title     = {{ToolLLM}: Facilitating Large Language Models to Master 16000+ Real-World {APIs}},
  author    = {Qin, Yujia and Liang, Shihao and Ye, Yining and Zhu, Kunlun and Yan, Lan and Lu, Yaxi and Lin, Yankai and Cong, Xin and Tang, Xiangru and Qian, Bill and Zhao, Sihan and Hong, Lauren and Tian, Runchu and Xie, Ruobing and Zhou, Jie and Gerstein, Mark and Li, Dahai and Liu, Zhiyuan and Sun, Maosong},
  booktitle = {International Conference on Learning Representations (ICLR)},
  year      = {2024}
}

@inproceedings{gorilla2024,
  title     = {Gorilla: Large Language Model Connected with Massive {APIs}},
  author    = {Patil, Shishir G. and Zhang, Tianjun and Wang, Xin and Gonzalez, Joseph E.},
  booktitle = {Advances in Neural Information Processing Systems (NeurIPS)},
  year      = {2024}
}

@inproceedings{anytool2024,
  title     = {{AnyTool}: Self-Reflective, Hierarchical Agents for Large-Scale {API} Calls},
  author    = {Du, Yu and Wei, Fangyun and Zhang, Hongyang},
  booktitle = {International Conference on Machine Learning (ICML)},
  year      = {2024}
}

@article{toolalpaca2023,
  title   = {{ToolAlpaca}: Generalized Tool Learning for Language Models with 3000 Simulated Cases},
  author  = {Tang, Qiaoyu and Deng, Ziliang and Lin, Hongyu and Han, Xianpei and Liang, Qiao and Cao, Boxi and Sun, Le},
  journal = {arXiv preprint arXiv:2306.05301},
  year    = {2023}
}

@article{rltf2024,
  title   = {{RLTF}: Reinforcement Learning from Unit Test Feedback},
  author  = {Liu, Jiate and Zhu, Yiqin and Xiao, Kaiwen and Fu, Qiang and Han, Xiao and Yang, Wei and Ye, Deheng},
  journal = {Transactions on Machine Learning Research (TMLR)},
  year    = {2024}
}

@article{torl2024,
  title   = {{ToRL}: Scaling Tool-Integrated RL},
  author  = {Li, Xuefeng and Zou, Haoyang and Liu, Pengfei},
  journal = {arXiv preprint arXiv:2503.23383},
  year    = {2025}
}

@article{voyager2023,
  title   = {Voyager: An Open-Ended Embodied Agent with Large Language Models},
  author  = {Wang, Guanzhi and Xie, Yuqi and Jiang, Yunfan and Mandlekar, Ajay and Xiao, Chaowei and Zhu, Yuke and Fan, Linxi and Anandkumar, Anima},
  journal = {Transactions on Machine Learning Research (TMLR)},
  year    = {2024}
}

@inproceedings{latm2023,
  title     = {Large Language Models as Tool Makers},
  author    = {Cai, Tianle and Wang, Xuezhi and Ma, Tengyu and Chen, Xinyun and Zhou, Denny},
  booktitle = {International Conference on Learning Representations (ICLR)},
  year      = {2024}
}

@inproceedings{creator2023,
  title     = {{CREATOR}: Tool Creation for Disentangling Abstract and Concrete Reasoning of Large Language Models},
  author    = {Qian, Cheng and Han, Chi and Fung, Yi R. and Qin, Yujia and Liu, Zhiyuan and Ji, Heng},
  booktitle = {Findings of the Association for Computational Linguistics: EMNLP},
  year      = {2023}
}

@inproceedings{trove2024,
  title     = {{TroVE}: Inducing Verifiable and Efficient Toolboxes for Solving Programmatic Tasks},
  author    = {Wang, Zhiruo and Fried, Daniel and Neubig, Graham},
  booktitle = {International Conference on Machine Learning (ICML)},
  year      = {2024}
}

@inproceedings{regal2024,
  title     = {{ReGAL}: Refactoring Programs to Discover Generalizable Abstractions},
  author    = {Stengel-Eskin, Elias and Prasad, Archiki and Bansal, Mohit},
  booktitle = {International Conference on Machine Learning (ICML)},
  year      = {2024}
}

@inproceedings{toolrl2025,
  title     = {{ToolRL}: Reward is All Tool Learning Needs},
  author    = {Qian, Cheng and Acikgoz, Emre Can and He, Qi and Wang, Hongru and Chen, Xiusi and Hakkani-T{\"u}r, Dilek and Tur, Gokhan and Ji, Heng},
  booktitle = {Advances in Neural Information Processing Systems (NeurIPS)},
  year      = {2025}
}

@article{retool2025,
  title   = {{ReTool}: Reinforcement Learning for Strategic Tool Use in {LLMs}},
  author  = {Feng, Jiazhan and Huang, Shijue and Qu, Xingwei and Zhang, Ge and Qin, Yujia and Zhong, Baoquan and Jiang, Chengquan and Chi, Jinxin and Zhong, Wanjun},
  journal = {arXiv preprint arXiv:2504.11536},
  year    = {2025}
}

@article{artist2025,
  title   = {Agentic Reasoning and Tool Integration for {LLMs} via Reinforcement Learning},
  author  = {Singh, Joykirat and Magazine, Raghav and Pandya, Yash and Nambi, Akshay},
  journal = {arXiv preprint arXiv:2505.01441},
  year    = {2025}
}

@article{deepseekmath2024,
  title   = {{DeepSeekMath}: Pushing the Limits of Mathematical Reasoning in Open Language Models},
  author  = {Shao, Zhihong and Wang, Peiyi and Zhu, Qihao and Xu, Runxin and Song, Junxiao and Bi, Xiao and Zhang, Haowei and Zhang, Mingchuan and Li, Y. K. and Wu, Yang and Guo, Daya},
  journal = {arXiv preprint arXiv:2402.03300},
  year    = {2024}
}

@article{deepseekr1,
  title   = {{DeepSeek-R1}: Incentivizing Reasoning Capability in {LLMs} via Reinforcement Learning},
  author  = {{DeepSeek-AI} and Guo, Daya and Yang, Dejian and Zhang, Haowei and Song, Junxiao and Zhang, Ruoyu and Xu, Runxin and Zhu, Qihao and Ma, Shirong and Wang, Peiyi and others},
  journal = {arXiv preprint arXiv:2501.12948},
  year    = {2025}
}

@misc{yuan2024craft,
      title={CRAFT: Customizing LLMs by Creating and Retrieving from Specialized Toolsets}, 
      author={Lifan Yuan and Yangyi Chen and Xingyao Wang and Yi R. Fung and Hao Peng and Heng Ji},
      year={2024},
      eprint={2309.17428},
      archivePrefix={arXiv},
      primaryClass={cs.CL},
      url={https://arxiv.org/abs/2309.17428}, 
}

@article{cobbe2021gsm8k,
  title     = {Training Verifiers to Solve Math Word Problems},
  author    = {Cobbe, Karl and Kosaraju, Vineet and Bavarian, Mohammad and
               Chen, Mark and Jun, Heewoo and Kaiser, Lukasz and
               Plappert, Matthias and Tworek, Jerry and Hilton, Jacob and
               Nakano, Reiichiro and Hesse, Christopher and Schulman, John},
  journal   = {arXiv preprint arXiv:2110.14168},
  year      = {2021}
}

@inproceedings{hendrycks2021math,
  title     = {Measuring Mathematical Problem Solving With the {MATH} Dataset},
  author    = {Hendrycks, Dan and Burns, Collin and Kadavath, Saurav and
               Arora, Akul and Basart, Steven and Tang, Eric and
               Song, Dawn and Steinhardt, Jacob},
  booktitle = {Proceedings of the Neural Information Processing Systems Track on Datasets and Benchmarks},
  year      = {2021}
}

@inproceedings{pasupat2015wikitable,
  title     = {Compositional Semantic Parsing on Semi-Structured Tables},
  author    = {Pasupat, Panupong and Liang, Percy},
  booktitle = {Proceedings of the 53rd Annual Meeting of the Association for
               Computational Linguistics and the 7th International Joint
               Conference on Natural Language Processing (Volume 1: Long Papers)},
  pages     = {1470--1480},
  year      = {2015},
  address   = {Beijing, China},
  publisher = {Association for Computational Linguistics}
}

@inproceedings{chen2021finqa,
  title     = {{F}in{QA}: A Dataset of Numerical Reasoning over Financial Data},
  author    = {Chen, Zhiyu and Chen, Wenhu and Smiley, Charese and Shah, Sameena and
               Borova, Iana and Langdon, Dylan and Moussa, Reema and Beane, Matt and
               Huang, Ting-Hao and Routledge, Bryan and Wang, William Yang},
  booktitle = {Proceedings of the 2021 Conference on Empirical Methods in Natural
               Language Processing},
  pages     = {3697--3711},
  year      = {2021},
  publisher = {Association for Computational Linguistics}
}

@inproceedings{wei2023chain,
  title     = {Chain-of-Thought Prompting Elicits Reasoning in Large Language Models},
  author    = {Wei, Jason and Wang, Xuezhi and Schuurmans, Dale and Bosma, Maarten and Ichter, Brian and Xia, Fei and Chi, Ed and Le, Quoc and Zhou, Denny},
  booktitle = {Advances in Neural Information Processing Systems (NeurIPS)},
  year      = {2022}
}

@inproceedings{liu2023evalplus,
  title     = {Is Your Code Generated by {ChatGPT} Really Correct? Rigorous Evaluation of Large Language Models for Code Generation},
  author    = {Liu, Jiawei and Xia, Chunqiu Steven and Wang, Yuyao and Zhang, Lingming},
  booktitle = {Advances in Neural Information Processing Systems (NeurIPS)},
  year      = {2023}
}

@article{austin2021mbpp,
  title={Program Synthesis with Large Language Models},
  author={Austin, Jacob and Odena, Augustus and Nye, Maxwell and Bosma, Maarten and
          Michalewski, Henryk and Dohan, David and Jiang, Ellen and Cai, Carrie and
          Terry, Michael and Le, Quoc and Sutton, Charles},
  journal={arXiv preprint arXiv:2108.07732},
  year={2021}
}

@inproceedings{park2023generative,
  title     = {Generative Agents: Interactive Simulacra of Human Behavior},
  author    = {Park, Joon Sung and O'Brien, Joseph C. and Cai, Carrie J. and Morris, Meredith Ringel and Liang, Percy and Bernstein, Michael S.},
  booktitle = {Proceedings of the 36th Annual ACM Symposium on User Interface Software and Technology (UIST)},
  year      = {2023}
}

@inproceedings{packer2024memgpt,
  title     = {{MemGPT}: Towards {LLMs} as Operating Systems},
  author    = {Packer, Charles and Wooders, Sarah and Lin, Kevin and Fang, Vivian and Patil, Shishir G. and Stoica, Ion and Gonzalez, Joseph E.},
  booktitle = {Conference on Language Modeling (COLM)},
  year      = {2024}
}

@inproceedings{shinn2023reflexion,
  title     = {{Reflexion}: Language Agents with Verbal Reinforcement Learning},
  author    = {Shinn, Noah and Cassano, Federico and Gopinath, Ashwin and Narasimhan, Karthik and Yao, Shunyu},
  booktitle = {Advances in Neural Information Processing Systems (NeurIPS)},
  year      = {2023}
}

@inproceedings{madaan2023selfrefine,
  title     = {Self-Refine: Iterative Refinement with Self-Feedback},
  author    = {Madaan, Aman and Tandon, Niket and Gupta, Prakhar and Hallinan, Skyler and Gao, Luyu and Wiegreffe, Sarah and Alon, Uri and Dziri, Nouha and Prabhumoye, Shrimai and Yang, Yiming and Gupta, Shashank and Majumder, Bodhisattwa Prasad and Hermann, Katherine and Welleck, Sean and Yazdanbakhsh, Amir and Clark, Peter},
  booktitle = {Advances in Neural Information Processing Systems (NeurIPS)},
  year      = {2023}
}

@article{zhu2023ghost,
  title   = {Ghost in the {M}inecraft: Generally Capable Agents for Open-World Environments via Large Language Models with Text-based Knowledge and Memory},
  author  = {Zhu, Xizhou and Chen, Yuntao and Tian, Hao and Tao, Chenxin and Su, Weijie and Yang, Chenyu and Huang, Gao and Li, Bin and Lu, Lewei and Wang, Xiaogang and Qiao, Yu and Zhang, Zhaoxiang and Dai, Jifeng},
  journal = {arXiv preprint arXiv:2305.17144},
  year    = {2023}
}

@inproceedings{qian2024experiential,
  title     = {Experiential Co-Learning of Software-Developing Agents},
  author    = {Qian, Chen and Dang, Yufan and Li, Jiahao and Liu, Wei and Xie, Zihao and Wang, Yifei and Chen, Weize and Yang, Cheng and Cong, Xin and Liu, Zhiyuan and Sun, Maosong},
  booktitle = {Proceedings of the 62nd Annual Meeting of the Association for Computational Linguistics (ACL)},
  year      = {2024}
}

@inproceedings{dreamcoder2021,
  title     = {{DreamCoder}: Bootstrapping Inductive Program Synthesis with Wake-Sleep Library Learning},
  author    = {Ellis, Kevin and Wong, Catherine and Nye, Maxwell and Sabl{\'e}-Meyer, Mathias and Morales, Luc and Hewitt, Luke and Cary, Luke and Solar-Lezama, Armando and Tenenbaum, Joshua B.},
  booktitle = {Proceedings of the 42nd ACM SIGPLAN International Conference on Programming Language Design and Implementation (PLDI)},
  year      = {2021}
}

@inproceedings{stitch2023,
  title     = {Top-Down Synthesis for Library Learning},
  author    = {Bowers, Matthew and Olausson, Theo X. and Wong, Lionel and Grand, Gabriel and Tenenbaum, Joshua B. and Ellis, Kevin and Solar-Lezama, Armando},
  booktitle = {Proceedings of the ACM on Programming Languages (POPL)},
  year      = {2023}
}

@inproceedings{coderag2024,
  title     = {{CodeRAG-Bench}: Can Retrieval Augment Code Generation?},
  author    = {Wang, Zora Zhiruo and Asai, Akari and Yu, Xinyan Velocity and Xu, Frank F. and Xie, Yiqing and Neubig, Graham and Fried, Daniel},
  booktitle = {Findings of the Association for Computational Linguistics (NAACL)},
  year      = {2024}
}

@inproceedings{codet5plus2023,
  title     = {{CodeT5+}: Open Code Large Language Models for Code Understanding and Generation},
  author    = {Wang, Yue and Le, Hung and Gotmare, Akhilesh Deepak and Bui, Nghi D. Q. and Li, Junnan and Hoi, Steven C. H.},
  booktitle = {Proceedings of the 2023 Conference on Empirical Methods in Natural Language Processing (EMNLP)},
  year      = {2023}
}

@article{raptor2024,
  title     = {{RAPTOR}: Recursive Abstractive Processing for Tree-Organized Retrieval},
  author    = {Sarthi, Parth and Abdullah, Salman and Tuli, Aditi and Khanna, Shubh and Goldie, Anna and Manning, Christopher D.},
  journal   = {International Conference on Learning Representations (ICLR)},
  year      = {2024}
}

%%%%%%%%%%%%%%%%%%%%%%%%%%%%%%%%%%%%%%%%%%%%%%%%%%%%%%%%%%%%

\appendix
\onecolumn

\section{Algorithms}
\label{app:algorithms}

This appendix collects the pseudocode for the main \textit{CoCoDA} training loop
(Algorithm~\ref{alg:todo-main}) and its two subroutines: Typed DAG Retrieval
(Algorithm~\ref{alg:hier-retrieve}) and online tool insertion
(Algorithm~\ref{alg:insert-tool}).

\begin{algorithm}[h]
\caption{CoCoDA: Tool-Augmented Planning with Online Co-Evolving Compositional DAG}
\label{alg:todo-main}
\begin{algorithmic}[1]
\Require Base student $\pi_{\text{base}}$, teacher $\mathcal{M}_T$, query stream $\mathcal{Q}$, success threshold $\rho$, reward coefficient $\lambda$, group size $G$, epochs $N$
\Ensure Co-evolved planner $\pi_{\theta^\star}$ and library $\mathcal{L}^\star=(\mathcal{V}^\star,\mathcal{E}^\star,\mathcal{I}^\star)$ with DAG $\mathcal{G}^\star$
\State \textcolor{blue}{$\triangleright$ \textit{Stage 1: Experience-based Tool Distillation (warm-start library)}}
\State $\mathcal{T}^{+}\!\leftarrow\!\textsc{Rollout}(\pi_{\text{base}},\mathcal{Q}_{\text{seed}})$ \Comment{keep only successful trajectories}
\State $\mathcal{V},\mathcal{E},\mathcal{I}\leftarrow\emptyset,\emptyset,\emptyset$
\For{all $\tau^{+}\in\mathcal{T}^{+}$}
  \For{all $t^\star\in\mathcal{M}_T(\tau^{+})$}
    \State \textsc{InsertTool}($t^\star,\mathcal{V},\mathcal{E},\mathcal{I}$) \Comment{Algorithm~\ref{alg:insert-tool}}
  \EndFor
\EndFor
\State $\mathcal{L}\leftarrow(\mathcal{V},\mathcal{E},\mathcal{I})$
\State \textcolor{blue}{$\triangleright$ \textit{Stage 2: Cold-start SFT on warm library}}
\State $\mathcal{D}_{\text{SFT}}\leftarrow\mathcal{M}_T(\mathcal{Q}_{\text{seed}},\mathcal{L})$
\State $\theta\leftarrow\textsc{SFT}(\pi_{\text{base}},\mathcal{D}_{\text{SFT}})$;\ \ $\pi_{\text{ref}}\leftarrow\pi_\theta$
\State \textcolor{blue}{$\triangleright$ \textit{Stage 3: Online Co-Evolution ($\pi_\theta$ and $\mathcal{L}$ update per query)}}
\For{epoch $=1$ to $N$}
  \For{each incoming query $q\in\mathcal{Q}$}
    \State \textcolor{gray}{// (a) Retrieve context from the \emph{current} library}
    \State $\mathcal{C}_q\leftarrow\textsc{TypedDAGRetrieve}(q,\mathcal{I},\mathcal{G})$ \Comment{Algorithm~\ref{alg:hier-retrieve}}
    \State \textcolor{gray}{// (b) Group rollout under current policy}
    \State Sample $\{\tau^{(i)}\}_{i=1}^{G}\sim\pi_\theta(\cdot\mid q,\mathcal{C}_q)$
    \State $R_i\leftarrow R_{\text{res}}(\tau^{(i)})+\lambda\sum_{j}\bigl(\mathrm{flat}(t_j^{(i)})-1\bigr)$
    \State \textcolor{gray}{// (c) Policy update}
    \State Update $\theta$ via GRPO with group-relative advantages over $\{R_i\}$
    \State \textcolor{gray}{// (d) Library update from this query's successful rollouts}
    \State $\mathcal{T}^{+}_q\leftarrow\{\tau^{(i)}\,:\,R_{\text{res}}(\tau^{(i)})\geq\rho\}$
    \If{$\mathcal{T}^{+}_q\neq\emptyset$}
      \For{all $\tau^{+}\in\mathcal{T}^{+}_q$}
        \For{all $t^\star\in\mathcal{M}_T(\tau^{+})$}
          \State \textsc{InsertTool}($t^\star,\mathcal{V},\mathcal{E},\mathcal{I}$) \Comment{merge / add edges / update DAG}
        \EndFor
      \EndFor
      \State $\mathcal{L}\leftarrow(\mathcal{V},\mathcal{E},\mathcal{I})$ \Comment{next query sees the updated library}
    \EndIf
  \EndFor
\EndFor
\State \Return $\pi_\theta,\ \mathcal{L}$
\end{algorithmic}
\end{algorithm}

\begin{algorithm}[h]
\caption{\textsc{TypedDAGRetrieve}: Typed DAG Retrieval over the Compositional Code DAG}
\label{alg:hier-retrieve}
\begin{algorithmic}[1]
\Require Query $q$, DAG $\mathcal{G}=(\mathcal{V},\mathcal{E})$ with per-node records $\{L_1,L_2,L_3,L_4\}$ (signature, description, specification, examples), planner $\pi_\theta$, shortlist size $k_2$
\Ensure Candidate set $\mathcal{C}_q$ annotated with graph-structural features
\State Decompose $q$ into typed sub-goals $\{(T_{\text{in}}^{(j)}\!\to\!T_{\text{out}}^{(j)},\ \text{intent}^{(j)})\}_{j=1}^{J}$
\State $\mathcal{C}_q\leftarrow\emptyset$
\For{$j=1$ to $J$}
  \State $\mathcal{S}_1^{(j)}\!\leftarrow\!\{v\in\mathcal{V}\mid L_1(v)\ \text{unifies with}\ T_{\text{in}}^{(j)}\!\to\!T_{\text{out}}^{(j)}\}$ \Comment{L1 type filter}
  \State $\mathcal{S}_2^{(j)}\!\leftarrow\!\mathrm{Top}\text{-}k_2\bigl\{\pi_\theta\bigl(\text{intent}^{(j)},L_2(v)\bigr) : v\in\mathcal{S}_1^{(j)}\bigr\}$ \Comment{L2 semantic scan}
  \State $\mathcal{S}_3^{(j)}\!\leftarrow\!\{v\in\mathcal{S}_2^{(j)}\mid L_3(v)\ \text{satisfies pre/post-conditions of}\ \text{intent}^{(j)}\}$ \Comment{L3 constraint check}
  \State $v^\star_j\!\leftarrow\!\arg\max_{v\in\mathcal{S}_3^{(j)}}\pi_\theta\bigl(\text{intent}^{(j)},L_4(v)\bigr)$ \Comment{L4 example rerank}
  \State Attach graph features $\bigl(d(v^\star_j),\mathrm{Ch}(v^\star_j),|\mathcal{G}_{v^\star_j}|\bigr)$
  \State $\mathcal{C}_q\leftarrow\mathcal{C}_q\cup\{v^\star_j\}$
\EndFor
\State \Return $\mathcal{C}_q$
\end{algorithmic}
\end{algorithm}

\begin{algorithm}[h]
\caption{\textsc{InsertTool}: Online Insertion of a New Tool into the Compositional DAG}
\label{alg:insert-tool}
\begin{algorithmic}[1]
\Require Candidate tool $t^\star$, vertex set $\mathcal{V}=\mathcal{V}_p\cup\mathcal{V}_c$, edge set $\mathcal{E}$, hierarchical index $\mathcal{I}$
\Ensure Updated $(\mathcal{V},\mathcal{E},\mathcal{I})$ with $t^\star$ inserted as primitive or composite, or unchanged on rejection
\State Validate $t^\star$ against $L_3,L_4$ of each $u\in\mathrm{Ch}(t^\star)$
\If{validation fails \textbf{or} $\mathcal{G}\cup\{t^\star\}$ not acyclic}
  \State \Return \Comment{reject $t^\star$}
\EndIf
\If{$\mathrm{Ch}(t^\star)=\emptyset$}
  \State $\mathcal{V}_p\leftarrow\mathcal{V}_p\cup\{t^\star\}$;\ \ $d(t^\star)\leftarrow 0$ \Comment{primitive}
\Else
  \State $\mathcal{V}_c\leftarrow\mathcal{V}_c\cup\{t^\star\}$;\ \ $d(t^\star)\leftarrow 1+\max_{u\in\mathrm{Ch}(t^\star)}d(u)$ \Comment{composite}
  \State $\mathcal{E}\leftarrow\mathcal{E}\cup\{(t^\star,u)\mid u\in\mathrm{Ch}(t^\star)\}$
\EndIf
\State Build record $\{L_1,L_2,L_3,L_4\}(t^\star)$ and insert into $\mathcal{I}$
\end{algorithmic}
\end{algorithm}

\section{Datasets}
\label{app:datasets}

We evaluate \textit{CoCoDA} on six public benchmarks that span three categories of execution-verifiable reasoning. Table~\ref{tab:dataset_stats} summarizes the split sizes used in our experiments.

\textbf{GSM8K}~\citep{cobbe2021gsm8k} is a dataset of $8.5$K grade-school math word problems that require multi-step arithmetic reasoning. Each problem is paired with a natural-language solution and a final numerical answer, so correctness can be checked by evaluating the Python expression produced by the planner against the reference answer.

\textbf{MATH}~\citep{hendrycks2021math} contains $12.5$K competition-level mathematics problems (algebra, geometry, number theory, precalculus, etc.) with step-by-step solutions and boxed final answers. Compared with GSM8K, it requires longer symbolic derivations, making it a natural stress test for compositional tool use.

\textbf{WikiTableQuestions (WTQ)}~\citep{pasupat2015wikitable} is a table question-answering benchmark over $22$K questions about $2{,}108$ semi-structured Wikipedia tables. Answers are cells, aggregations, or comparisons computable by combining SQL-style table lookups with numerical operators.

\textbf{FinQA}~\citep{chen2021finqa} is a financial QA dataset of $8{,}281$ questions grounded in real earnings reports, where each answer is produced by a short numerical reasoning program over tables and accompanying text. It emphasizes multi-step arithmetic over structured financial data.

\textbf{EvalPlus}~\citep{liu2023evalplus} is an extension of HumanEval and MBPP that augments each problem with a substantially larger and more adversarial test suite, giving a more faithful estimate of functional correctness than the original pass@1. We use its HumanEval+ split for code Task.

\textbf{MBPP}~\citep{austin2021mbpp} (Mostly Basic Python Problems) contains $974$ short Python programming tasks with natural-language descriptions and three unit tests each. Correctness is defined by passing all provided test cases.

All six benchmarks share the property that each task can be verified by executing deterministic code (Python for math and coding, SQL combined with numerical computation for tabular analysis), which makes them well-suited to the execution-grounded reward signal used in our GRPO training.

\section{Baseline Methods}
\label{app:baselines}

We group our baselines into three categories that together cover the main design choices our framework departs from. All baselines are trained or prompted on the same Qwen3 student backbone and receive identical task prompts.

\paragraph{No-tool chain-of-thought baselines.}
\textbf{CoT (student)}~\citep{wei2023chain} prompts the base Qwen3 student with standard chain-of-thought reasoning and no external tool access, establishing the performance attainable from pure parametric reasoning at the student scale. \textbf{CoT (teacher)} applies the same protocol to the Qwen3-32B teacher and is reported as an upper-bound reference.

\paragraph{Fixed-library tool-augmented methods.}
\textbf{ReAct}~\citep{react2023} interleaves reasoning traces and tool calls at inference time; the planner chooses from a fixed tool inventory and is not trained via reinforcement learning. \textbf{ToRL}~\citep{torl2024} trains a tool-integrated reasoning policy with reinforcement learning over a static set of tools, optimizing for end-to-end task reward. \textbf{ReTool}~\citep{retool2025} also trains a tool-integrated reasoner with GRPO-style updates and shares our RL backbone, but keeps the tool inventory frozen throughout training, isolating the contribution of library co-evolution.

\paragraph{Library-learning methods.}
\textbf{CREATOR}~\citep{creator2023} disentangles tool creation from tool use: a teacher model first synthesizes a flat library of Python utilities offline, and the solver later invokes them, without feedback from solver performance to library construction. \textbf{TroVE}~\citep{trove2024} induces a verified function library by abstracting recurring patterns from successful program traces and reuses them as a flat collection, again decoupling library growth from policy optimization.

\begin{table}[h]
\centering
\small
\caption{Statistics of the six evaluation benchmarks.}
\label{tab:dataset_stats}
\begin{tabular}{lllr}
\toprule
\textbf{Dataset} & \textbf{Category} & \textbf{Verification} & \textbf{\#Test} \\
\midrule
GSM8K & Math / Logical & Exec.\ (numeric) & $1{,}319$ \\
MATH & Math / Logical & Exec.\ (symbolic) & $5{,}000$ \\
WikiTableQuestions & Tabular Analysis & SQL + numeric & $4{,}344$ \\
FinQA & Tabular Analysis & Numeric program & $1{,}147$ \\
EvalPlus (HumanEval+) & Code Task & Unit tests & $164$ \\
MBPP & Code Task & Unit tests & $500$ \\
\bottomrule
\end{tabular}
\end{table}

\section{Prompts}
\label{app:prompts}

This appendix lists every prompt used by \textit{CoCoDA}. All prompts are issued
to either the teacher model $\mathcal{M}_T$ (Qwen3-32B, used for tool
abstraction, cold-start SFT generation, and semantic deduplication) or the
student planner $\pi_\theta$ (for task decomposition, retrieval, and
tool-augmented planning). Placeholders in curly braces are substituted at
runtime. We fix temperature $0.2$ and top-$p$ $0.95$ for all auxiliary
(non-rollout) prompts; rollout-time generation settings are reported in
Appendix~\ref{app:hparams}.

\subsection{Teacher Tool-Abstraction Prompt}
\label{app:prompt-abstract}

Used in Stage~1 and Stage~3(d) of Algorithm~\ref{alg:todo-main} to distill a
successful trajectory $\tau^{+}$ into a set of candidate compositional tools
$\{t^\star\}$ whose bodies invoke only tools already in $\mathcal{V}$.

\begin{promptbox}{Teacher Tool-Abstraction Prompt}
[System]
You are a library curator. Given a successful problem-solving trajectory and
the current tool library, propose ONE OR MORE reusable compositional tools
that abstract concrete arguments into typed parameters. Every proposed tool
must (a) be implemented by calling ONLY tools already in the library,
(b) expose a single typed entry point, and (c) come with a four-level
metadata record (L1-L4). Do NOT propose a tool whose body is a single
existing tool call.

[User]
Current library (name :: signature ; one-line description):
{LIBRARY_SIGNATURES_AND_L2}

Successful trajectory (interleaved reasoning, tool calls, and results):
{TRAJECTORY}

Return a JSON list. Each element has fields:
  name   : snake_case identifier
  L1     : "{name} :: {T_in} -> {T_out}; deps=[child_1, ...]"
  L2     : "<one-line description>; tags=[...]"
  L3     : {"pre": "...", "post": "...", "complexity": "O(...)"}
  L4     : [{"in": "...", "out": "..."}, ...]   # >= 2 examples
  body   : Python-style pseudocode calling only existing library tools
If no useful abstraction exists, return [].
\end{promptbox}

\subsection{Teacher Cold-Start SFT Prompt}
\label{app:prompt-sft}

Used once per seed query $q\in\mathcal{Q}_{\text{seed}}$ to generate
demonstrations $\mathcal{D}_{\text{SFT}}$ against the warm library
$\mathcal{L}$.

\begin{promptbox}{Teacher Cold-Start SFT Prompt}
[System]
You are solving a task by planning a short program that invokes tools from
the provided library. Emit a reasoning trace interleaved with tool calls.
Each tool call must use EXACTLY the signature and preconditions declared
below. Prefer higher-level composite tools when they subsume a sequence
of primitive calls. After the final tool call, emit an <answer>...</answer>
block with the final answer.

[User]
Available tools (L1 signature + L2 description + L3 specification):
{TOOL_CARDS}

Query: {QUERY}

Respond with:
<plan> ... </plan>
<calls>
  tool_name(args) -> result   # one per line
</calls>
<answer> ... </answer>
\end{promptbox}

\subsection{Task-Decomposition Prompt (Planner)}
\label{app:prompt-decompose}

Executed by $\pi_\theta$ at the start of every query to emit a sequence of
typed sub-goals consumed by \textsc{TypedDAGRetrieve}
(Algorithm~\ref{alg:hier-retrieve}, line~1).

\begin{promptbox}{Task-Decomposition Prompt}
[System]
Decompose the user query into an ordered list of typed sub-goals. Each
sub-goal is a triple (T_in, T_out, intent) where T_in and T_out are concrete
Python types (int, float, str, List[float], DataFrame, ...) and intent is a
one-sentence description of what the sub-goal should accomplish. Keep the
list short: 1-5 sub-goals.

[User]
Query: {QUERY}

Return JSON:
[
  {"T_in": "...", "T_out": "...", "intent": "..."},
  ...
]
\end{promptbox}

\subsection{L2 Semantic-Ranker Prompt}
\label{app:prompt-l2}

Single forward pass over all L1-surviving candidates for a given sub-goal,
returning a top-$k_2$ shortlist. The planner itself plays the role of the
ranker, so no separate model is introduced.

\begin{promptbox}{L2 Semantic-Ranker Prompt}
[System]
You are a tool-retrieval ranker. Given a sub-goal and a list of candidate
tools with their one-line descriptions, return the indices of the top-K
tools most likely to implement the sub-goal. Consider BOTH semantic match
and domain tags.

[User]
Sub-goal: {INTENT}  (expects {T_IN} -> {T_OUT})
Candidates (idx. name :: sig -- L2):
  0. add :: (float,float)->float -- Add two floats; tags=[arithmetic]
  1. quadratic_expr :: (float,float,float,float)->float -- Evaluate a x^2+b x+c; tags=[algebra,polynomial]
  ...

Return a JSON list of the top-{K2} indices, most relevant first.
\end{promptbox}

\subsection{L3 Constraint-Check Prompt}
\label{app:prompt-l3}

Applied to each candidate that survives the L2 semantic shortlist. The
planner verifies whether the tool's pre/post-conditions and complexity
specification are compatible with the sub-goal; only candidates judged
\texttt{compatible} proceed to L4 reranking.

\begin{promptbox}{L3 Constraint-Check Prompt}
[System]
You are a tool-retrieval specification checker. For each candidate tool, decide
whether its declared pre-conditions, post-conditions, and complexity are
consistent with the sub-goal's input/output types and intent. Reject a
candidate if (i) its pre-condition cannot be satisfied by the sub-goal's
inputs, (ii) its post-condition contradicts the sub-goal's expected output,
or (iii) its complexity exceeds the sub-goal's stated budget (when given).
Otherwise accept it.

[User]
Sub-goal: {INTENT}  (expects {T_IN} -> {T_OUT}; budget={BUDGET})

Candidates (idx. name :: sig -- L3 specification):
  0. add :: (float,float)->float --
     pre="finite inputs"; post="returns x+y"; complexity="O(1)"
  1. quadratic_expr :: (float,float,float,float)->float --
     pre="real coefficients"; post="returns a*x^2+b*x+c"; complexity="O(1)"
  ...

Return a JSON list of objects, one per candidate, in the same order:
[
  {"idx": 0, "verdict": "compatible" | "incompatible", "reason": "..."},
  ...
]
\end{promptbox}

\subsection{L4 Example-Based Reranking Prompt}
\label{app:prompt-l4}

Applied only to candidates that survive the L3 constraint check.

\begin{promptbox}{L4 Example-Based Reranking Prompt}
[System]
You will select the single best tool for a sub-goal by comparing each
candidate's concrete input/output examples to the sub-goal's expected
behaviour. Prefer tools whose examples most closely match the expected
input shape and output semantics.

[User]
Sub-goal: {INTENT}
Expected input shape: {INPUT_SHAPE}
Expected output shape: {OUTPUT_SHAPE}

Candidates (idx. name -- L4 examples):
  0. add -- [(1.0,2.0)->3.0, (-1.5,2.5)->1.0]
  1. quadratic_expr -- [(1,-3,2,2)->0, (2,0,-1,3)->17]
  ...

Return the index of the single best candidate and a one-sentence
justification.
\end{promptbox}

\subsection{Planner System Prompt for Tool-Augmented Rollouts}
\label{app:prompt-rollout}

Conditions every GRPO rollout. The context $\mathcal{C}_q$ returned by
\textsc{TypedDAGRetrieve} is injected under \texttt{\{TOOL\_CARDS\}}, and each
card contains L1, L2, L3, L4, the depth $d(v)$, and the direct children
$\mathrm{Ch}(v)$.

\begin{promptbox}{Planner System Prompt for Tool-Augmented Rollouts}
[System]
You are a tool-using agent. Solve the task by emitting interleaved
<think>...</think>, <call>tool_name(args)</call>, and <obs>...</obs> blocks.
After a <call>, wait for the next <obs> before continuing. When confident,
emit <answer>...</answer> and stop.

Rules:
- Only call tools listed below, using the exact signature.
- When a composite tool subsumes several primitives, prefer the composite.
- Never invent arguments that violate L3 preconditions.
- Do NOT emit code outside tool calls.

[Tools]
{TOOL_CARDS}   # each card: L1 / L2 / L3 / L4 / depth / children

[User]
{QUERY}
\end{promptbox}

\subsection{InsertTool Semantic-Deduplication Prompt}
\label{app:prompt-dedup}

Invoked inside \textsc{InsertTool} (Algorithm~\ref{alg:insert-tool}) when a
candidate $t^{\star}$ passes validation. Returns either a merge decision
against an existing tool or a distinctness verdict.

\begin{promptbox}{InsertTool Semantic-Deduplication Prompt}
[System]
Decide whether a candidate tool is a near-duplicate of any existing tool in
the library. Two tools are near-duplicates iff their L1 signatures unify AND
their L3 specifications are logically equivalent AND their L4 examples agree on
all overlapping inputs. If so, MERGE (return the existing tool id and any
L2/L4 entries worth appending). Otherwise, return DISTINCT.

[User]
Candidate:
  L1: {CAND_L1}
  L2: {CAND_L2}
  L3: {CAND_L3}
  L4: {CAND_L4}

Type-compatible existing tools:
{COMPATIBLE_TOOLS}

Return JSON: {"decision": "MERGE"|"DISTINCT", "merge_into": <id or null>,
              "append_L2": "...", "append_L4": [...]}
\end{promptbox}

\section{Hyperparameters}
\label{app:hparams}

Table~\ref{tab:hparams} lists every hyperparameter used in the main
experiments. All student sizes share the same retrieval, reward, and
generation configuration; only the optimizer learning rate and the
parameter-efficient fine-tuning choice vary across student sizes. The
teacher $\mathcal{M}_T$ is frozen throughout.

\begin{table}[htbp]
\centering
\small
\setlength{\tabcolsep}{6pt}
\renewcommand{\arraystretch}{1.1}
\caption{Hyperparameters of \textit{CoCoDA}. Values without a size qualifier
apply to every student. ``FT'' = full fine-tuning with DeepSpeed ZeRO-3;
``LoRA'' = LoRA adapters on every attention and MLP projection.}
\label{tab:hparams}
\begin{tabular}{l l l}
\toprule
\textbf{Group} & \textbf{Hyperparameter} & \textbf{Value} \\
\midrule
\multirow{6}{*}{GRPO}
 & Group size $G$                        & $8$ \\
 & Clip range $\epsilon$                 & $0.2$ \\
 & KL coefficient                        & $0.01$ (to $\pi_{\text{ref}}$) \\
 & Rollout batch (queries / step)        & $64$ \\
 & Online epochs $N$                     & $3$ \\
 & Advantage normalization               & group mean \& std \\
\midrule
\multirow{3}{*}{Reward}
 & Compositional weight $\lambda$        & $0.20$ \\
 & Success threshold $\rho$              & $0.8$ \\
 & Result reward $R_{\text{res}}$        & $\{0,1\}$ via deterministic verifier \\
\midrule
\multirow{3}{*}{Retrieval}
 & L2 shortlist size $k_2$               & $32$ \\
 & L1 type unification                   & structural (covariant outputs) \\
 & L3 specification check                     & LLM-judged on declared pre/post/complexity (App.~\ref{app:prompt-l3}) \\
\midrule
\multirow{5}{*}{Cold-start SFT}
 & Seed queries $|\mathcal{Q}_{\text{seed}}|$ & $512$ \\
 & Epochs                                & $3$ \\
 & Batch size                            & $64$ \\
 & Warmup ratio                          & $0.03$ \\
 & LR schedule                           & cosine to $0$ \\
\midrule
\multirow{4}{*}{Optimizer}
 & Type                                  & AdamW, $\beta_1=0.9$, $\beta_2=0.95$ \\
 & Weight decay                          & $0.01$ \\
 & Grad clipping                         & $1.0$ \\
 & Mixed precision                       & bf16 \\
\midrule
\multirow{4}{*}{Per-size}
 & 0.6B / 1.7B / 4B                      & FT, lr $=2\!\times\!10^{-5},\,1\!\times\!10^{-5},\,5\!\times\!10^{-6}$ \\
 & 8B                                    & LoRA ($r{=}64,\alpha{=}128$), lr $=1\!\times\!10^{-5}$ \\
 & Max sequence length                   & $4096$ \\
 & Rollouts / query                      & $G{=}8$ \\
\midrule
\multirow{5}{*}{Generation}
 & Student rollout temperature           & $0.7$ \\
 & Student top-$p$                       & $0.95$ \\
 & Student max new tokens                & $2048$ \\
 & Teacher temperature / top-$p$         & $0.2$ / $0.95$ \\
 & Teacher max new tokens                & $4096$ \\
\midrule
\multirow{3}{*}{Library init}
 & Warm library $|\mathcal{V}|$ after Stage~1 & $\approx 180$ \\
 & Primitive set size                    & $42$ (shared across domains) \\
 & Max DAG depth at init                 & $2$ \\
\bottomrule
\end{tabular}
\end{table}

\section{Tool Library Data}
\label{app:library}

We give concrete examples of the four-level tool records introduced in
Section~\ref{sec:hier-index}, a snapshot of a small region of the
Compositional Tool DAG, and end-of-training statistics of the evolved
library on each benchmark.

\subsection{Example Primitive Tool Record}

\begin{verbatim}
name : add
L1   : add :: (float, float) -> float ; deps = []
L2   : "Return the sum of two real numbers." ; tags = [arithmetic, primitive]
L3   : pre  : a, b are finite floats
       post : returns a + b
       complexity : O(1)
L4   : [ (1.0, 2.0)   -> 3.0,
         (-1.5, 2.5)  -> 1.0,
         (0.0, 0.0)   -> 0.0 ]
body : def add(a, b):
           return a + b
\end{verbatim}

\subsection{Example Compositional Tool Record}

The composite \texttt{quadratic\_expr} has depth $d=2$ and children
$\{\texttt{add}, \texttt{mul}, \texttt{pow\_int}\}$.

\begin{verbatim}
name : quadratic_expr
L1   : quadratic_expr :: (float, float, float, float) -> float
       deps = [add, mul, pow_int]
L2   : "Evaluate the quadratic a*x^2 + b*x + c at x."
       tags = [algebra, polynomial, closed-form]
L3   : pre  : a, b, c, x are finite floats
       post : returns a*x^2 + b*x + c
       complexity : O(1)
L4   : [ (1, -3,  2, 2)  -> 0.0,
         (2,  0, -1, 3)  -> 17.0,
         (0,  1,  0, 5)  -> 5.0 ]
body : def quadratic_expr(a, b, c, x):
           return add( add( mul(a, pow_int(x, 2)), mul(b, x) ), c )
\end{verbatim}

A second example, extracted on WikiTableQuestions during online co-evolution,
aggregates a numeric column under a row filter:

\begin{verbatim}
name : sum_where
L1   : sum_where :: (DataFrame, str, Predicate) -> float
       deps = [filter_rows, select_column, sum_list]
L2   : "Sum a numeric column over rows satisfying a predicate."
       tags = [tabular, aggregation]
L3   : pre  : column exists and is numeric; predicate is total on rows
       post : returns sum of column over filtered rows
       complexity : O(n)
L4   : [ (sales_df, "amount", lambda r: r.region=="EU") -> 31420.0, ... ]
body : def sum_where(df, col, pred):
           return sum_list( select_column( filter_rows(df, pred), col ) )
\end{verbatim}

\subsection{Compositional Tool DAG Snapshot (Math Domain)}

Figure~\ref{fig:dag-snippet} sketches a region of the DAG after training.
The figure makes visible how higher-depth composites are layered on top of
primitive arithmetic operators, with shared low-level nodes reused across
multiple composites, which is exactly the reuse pattern that drives the
compositional advantage established in Theorem~\ref{thm:compositional}.

\begin{figure}[ht]
\centering
\includegraphics[width=0.9\linewidth]{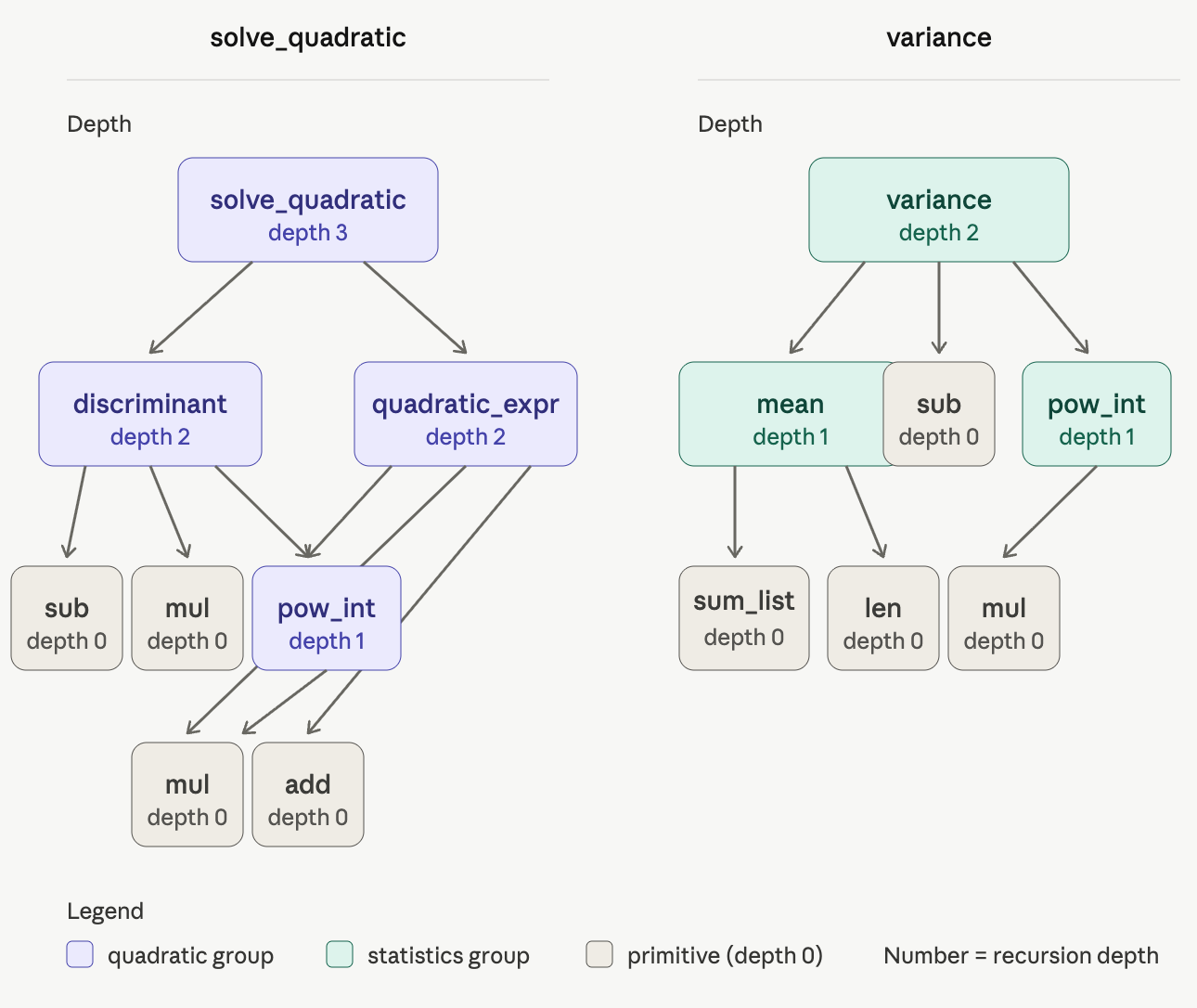}
\caption{A subgraph of the Compositional Tool DAG on the math domain after
training. Edges point from composites to their direct dependencies.}
\label{fig:dag-snippet}
\end{figure}

\subsection{End-of-Training Library Statistics}

Table~\ref{tab:lib-stats} reports the state of $\mathcal{L}$ at the end of
Stage~3 for each benchmark, broken down by primitives, composites,
maximum depth, and mean fan-out.

\begin{table}[htbp]
\centering
\small
\setlength{\tabcolsep}{6pt}
\renewcommand{\arraystretch}{1.1}
\caption{Evolved library statistics at the end of training (4B student).
$|\mathcal{V}_p|$ is fixed at initialization; $|\mathcal{V}_c|$ grows during
co-evolution. Mean fan-out is computed over composites only.}
\label{tab:lib-stats}
\begin{tabular}{l cccc c}
\toprule
\textbf{Benchmark} & $|\mathcal{V}_p|$ & $|\mathcal{V}_c|$ & $\max_v d(v)$ & mean $|\mathrm{Ch}(v)|$ & \# rejected by \textsc{InsertTool} \\
\midrule
GSM8K            & 18 & 291 & 4 & 2.6 & 112 \\
MATH             & 18 & 364 & 4 & 2.8 & 158 \\
WTQ              & 14 & 217 & 3 & 2.3 &  78 \\
FinQA            & 14 & 246 & 4 & 2.5 &  94 \\
EvalPlus         & 22 & 183 & 3 & 2.2 &  61 \\
MBPP             & 22 & 208 & 3 & 2.3 &  73 \\
\bottomrule
\end{tabular}
\end{table}

\section{Dataset Details}
\label{app:dataset-details}

Table~\ref{tab:datasets} summarises the six benchmarks. We use the official
train/test splits for GSM8K, MATH, WTQ, FinQA, and MBPP, and the full
EvalPlus suite (extended HumanEval with hardened unit tests) for code
generation. The training split is used both as the seed set
$\mathcal{Q}_{\text{seed}}$ and as the GRPO query stream $\mathcal{Q}$;
evaluation is reported on the held-out test split.

\begin{table}[htbp]
\centering
\small
\setlength{\tabcolsep}{6pt}
\renewcommand{\arraystretch}{1.1}
\caption{Dataset statistics and verifier used for each benchmark.}
\label{tab:datasets}
\begin{tabular}{l l r r l}
\toprule
\textbf{Dataset} & \textbf{Task} & \textbf{\#train} & \textbf{\#test} & \textbf{Verifier} \\
\midrule
GSM8K      & grade-school math     & 7,473  & 1,319 & Python exec + exact-match on numeric answer \\
MATH       & competition math      & 7,500  & 5,000 & Python exec + SymPy equivalence check \\
WTQ        & table QA              & 14,152 & 4,344 & SQL exec + normalized string/numeric match \\
FinQA      & financial table QA    & 6,251  & 1,147 & Python exec + relative-error $<10^{-3}$ \\
EvalPlus   & code task             & ---    &   164 & Extended HumanEval unit tests (pass@1) \\
MBPP       & code task             &   374  &   500 & Crowd-sourced unit tests (pass@1) \\
\bottomrule
\end{tabular}
\end{table}

\section{Case Studies}
\label{app:cases}

\subsection{Cascaded Retrieval on a GSM8K Query}

\textbf{Query.} \emph{``Lena buys 3 notebooks at \$4 each and 2 pens at
\$1.50 each. How much does she spend in total?''}

\textbf{Task decomposition.}
\begin{verbatim}
[
  {"T_in":"(int,float)", "T_out":"float", "intent":"cost of notebooks"},
  {"T_in":"(int,float)", "T_out":"float", "intent":"cost of pens"},
  {"T_in":"(float,float)","T_out":"float","intent":"total spend"}
]
\end{verbatim}

\textbf{L1 type filter.} $1{,}873$ library tools $\to$ $41$ type-compatible
candidates per sub-goal.

\textbf{L2 shortlist ($k_2=32$).} Top-ranked candidates include
\texttt{mul}, \texttt{linear\_cost}, \texttt{unit\_price\_times\_qty},
\texttt{add}, \texttt{sum\_list}.

\textbf{L3 constraint check.} Discards \texttt{sum\_list} (requires a list
input); retains \texttt{mul}, \texttt{linear\_cost},
\texttt{unit\_price\_times\_qty}, \texttt{add}.

\textbf{L4 rerank.} For the first two sub-goals,
\texttt{unit\_price\_times\_qty} (depth $1$, example
$(3, 4.0) \to 12.0$) outranks \texttt{mul}; for the third,
\texttt{add} wins. Planner emits a two-call trajectory instead of the
four-call primitive-only trajectory, receiving a higher shaped reward
under $R_{\text{comp}}$.

\subsection{A Tool Created from Success and Later Reused}

Early in MATH training, the planner solves \emph{``find the roots of
$x^2-5x+6=0$''} by invoking \texttt{sub}, \texttt{mul}, \texttt{pow\_int},
\texttt{sqrt}, \texttt{add}, \texttt{div}. The teacher abstracts the
trajectory into a new composite \texttt{solve\_quadratic} (depth $3$).
On the \emph{next} query, \emph{``for what $k$ does $x^2+kx+9=0$ have a
double root?''}, \textsc{TypedDAGRetrieve} surfaces
\texttt{solve\_quadratic} at the L4 stage, and the planner solves the
problem in two calls rather than seven. This illustrates the per-query
co-evolution claimed in Theorem~\ref{thm:monotone}.

\subsection{Rejected Candidates}

Three representative \textsc{InsertTool} rejections observed during
training:

\begin{description}
\item[Spec failure.] Candidate \texttt{safe\_div} was proposed with
postcondition ``returns $a/b$ for any $b$'' but its body dispatches to
\texttt{div}, whose precondition requires $b\neq 0$. Validation against
$L_3(\texttt{div})$ fails and the tool is rejected.
\item[Cycle.] On MBPP the teacher proposed a composite
\texttt{memoize\_factorial} that, after substitution, invokes
\texttt{fact\_rec} which already depends on \texttt{memoize\_factorial}.
The acyclicity check at line~3 of Algorithm~\ref{alg:insert-tool}
rejects the insertion.
\item[Near-duplicate merge.] A WTQ rollout produced \texttt{column\_sum\_if},
whose L1/L3/L4 agree with the existing \texttt{sum\_where}. The
deduplication prompt (Appendix~\ref{app:prompt-dedup}) returns
\texttt{MERGE}; the candidate's additional L4 examples are appended to
\texttt{sum\_where} and no new vertex is created.
\end{description}

\section{Compute and Reproducibility}
\label{app:compute}

All experiments are run on a single node with $4\times$ NVIDIA H200 80GB
GPUs, $2\times$ AMD EPYC 9654 CPUs, and 1.5TB RAM. Rollouts use vLLM
($0.6.3$) with tensor-parallel $=4$; optimization uses
PyTorch $2.5$ and DeepSpeed ZeRO-3 for FT runs and PEFT $0.13$ for the LoRA
run. The RL loop is implemented on top of the verl framework.

Table~\ref{tab:compute} reports end-to-end wall-clock training time and
GPU-hours per student size. Latency numbers in
Figure~\ref{fig:scaling} are measured with a warm vLLM cache at batch
size $1$ and averaged over $256$ held-out queries per benchmark; error
bars are one standard deviation over $3$ seeds.

\begin{table}[htbp]
\centering
\small
\setlength{\tabcolsep}{6pt}
\renewcommand{\arraystretch}{1.1}
\caption{End-to-end training cost of \textit{CoCoDA} on $4\times$H200.}
\label{tab:compute}
\begin{tabular}{l r r r}
\toprule
\textbf{Student} & \textbf{Stage 1+2 (h)} & \textbf{Stage 3 / epoch (h)} & \textbf{GPU-hours (total)} \\
\midrule
0.6B & 1.2 &  2.1 & 35 \\
1.7B & 1.8 &  3.4 & 55 \\
4B   & 2.9 &  5.6 & 87 \\
8B   & 3.4 &  6.8 (LoRA) & 98 \\
\bottomrule
\end{tabular}
\end{table}

All main-table numbers are averaged over $3$ seeds ($\{13,42,2026\}$) and
the standard deviation is at most $0.4$ on GSM8K/MATH/WTQ/FinQA and at most
$0.7$ on EvalPlus/MBPP.

\section{Ablation Implementation Details}
\label{app:ablations}

Each ablated variant replaces exactly one component with a ``natural
non-structural counterpart'' and re-trains the 4B student under identical
optimizer, reward, and generation settings (Appendix~\ref{app:hparams}).
Concretely:

\begin{description}
\item[w/o CTD (flat library).] The graph $\mathcal{G}$ is discarded. Every
tool, including would-be composites, is stored as an independent entry;
edges and depths are unavailable at retrieval time, and the compositional
reward term is set to $0$ (i.e., $\mathrm{flat}(t)=1$ and hence
$\Phi(t)=0$ for all $t$) so that only $R_{\text{res}}$ remains as
reward. \textsc{InsertTool} still performs acyclicity-free validation
and deduplication.

\item[w/o TDR (flat dense retrieval).] All four annotation levels are
concatenated into a single text blob per node and embedded once with
\texttt{BAAI/bge-large-en-v1.5}. Retrieval is top-$k$ ($k{=}8$) cosine
similarity to the query embedding, equivalent to what text-style
hierarchical RAG and similarity-clustered skill memories collapse to
when code-specific structure is removed. The Compositional Tool DAG and
the graph-aware reward are preserved.

\item[w/o GAR (exec-only reward).] The reward is reduced to
$R(\tau)=R_{\text{res}}(\tau)$. The DAG and Typed DAG Retrieval are
unchanged, but the planner receives no structural pressure. All other
components, including library updates, are unchanged.

\item[w/o CTD + TDR.] Flat library \emph{and} flat dense retrieval; compositional
reward set to $0$.

\item[w/o CTD + TDR + GAR.] All three ablations compose; equivalent to a
GRPO-trained ReTool-style planner over a flat pool of tools collected from
the warm-start phase.
\end{description}

The reward coefficient $\lambda$ is held at its main-experiment value
($0.20$) rather than re-tuned per ablation, so that the ablation
isolates the \emph{mechanism} rather than the coefficient schedule. A
separate re-tuned version produced accuracies within $0.3$ points of
the values reported in Table~\ref{tab:ablation}.

\section{Sensitivity Sweeps}
\label{app:sensitivity}

\paragraph{Reward coefficient $\lambda$.}
Table~\ref{tab:sens-lambda} sweeps the compositional weight $\lambda$ on
GSM8K with the 4B student. Accuracy is largely insensitive in the range
$[0.1, 0.3]$ and degrades outside it: too small a $\lambda$ removes
structural pressure, while too large a $\lambda$ biases rollouts toward
deep composites even when primitives would suffice, raising the rate of
$R_{\text{res}}=0$ rollouts.

\begin{table}[htbp]
\centering
\small
\setlength{\tabcolsep}{6pt}
\renewcommand{\arraystretch}{1.1}
\caption{Sensitivity of GSM8K accuracy (4B student) to the compositional
reward weight $\lambda$.}
\label{tab:sens-lambda}
\begin{tabular}{l cccccc}
\toprule
$\lambda$ & $0.00$ & $0.05$ & $0.10$ & $0.20$ & $0.30$ & $0.50$ \\
\midrule
GSM8K (\%) & 91.32 & 92.07 & 92.45 & \textbf{92.64} & 92.41 & 90.98 \\
\bottomrule
\end{tabular}
\end{table}

\paragraph{Shortlist size $k_2$.}
Varying $k_2\in\{8,16,32,64,128\}$ at a fixed 4B student changes GSM8K
accuracy by at most $0.6$ points; retrieval latency grows nearly linearly
in $k_2$. We fix $k_2=32$ as the knee of the accuracy--cost frontier.

\paragraph{Success threshold $\rho$.}
Setting $\rho$ too low ($\le 0.4$) admits noisy compositions and raises
the \textsc{InsertTool} rejection rate at later queries; setting
$\rho=1.0$ starves the library in early training. The value $\rho=0.8$
used in the main experiments balances library growth and purity: on MATH,
it yields $364$ retained composites vs.\ $419$ at $\rho=0.5$ (of which
$112$ are later overwritten by duplicates) and $207$ at $\rho=1.0$.

%%%%%%%%%%%%%%%%%%%%%%%%%%%%%%%%%%%%%%%%%%%%%%%%%%%%%%%%%%%%

\section{Theoretical Proofs}
\label{sec:proofs}
\label{app:proofs}

In this section, we state the assumptions underlying our analysis and provide
the detailed proofs for the theoretical results stated in
Section~\ref{sec:theory}.

\subsection{Assumptions}

\begin{assumption}[Level Cost Ordering]
\label{asm:level-cost}
For every tool $v\in\mathcal{V}$, the per-tool token costs satisfy
$c_1(v)\ll c_2(v)\ll c_3(v)$ and $c_2(v)\ll c_4(v)$ (no order is imposed
between $c_3(v)$ and $c_4(v)$). Moreover, there exist constants
$0<\alpha_1,\alpha_2,\alpha_3<1$ such that the L1 type filter retains at most
an $\alpha_1$-fraction of $\mathcal{V}$, the L2 semantic scan retains at most
an $\alpha_2$-fraction of its input, and the L3 constraint check retains at
most an $\alpha_3$-fraction of its input.
\end{assumption}

\begin{assumption}[Bounded Rewards and Verifier]
\label{asm:verifier}
The result reward $R_{\text{res}}(\tau)\in[0,1]$ is determined by a
deterministic verifier, and candidate tools abstracted from any successful
trajectory $\tau^{+}$ that pass \textsc{InsertTool}'s spec and acyclicity
checks are preserved in $\mathcal{L}$ across subsequent updates.
\end{assumption}

\begin{assumption}[Retrieval Consistency]
\label{asm:retrieval}
For every query $q$ and any two libraries $\mathcal{L}\subseteq\mathcal{L}'$
(meaning $\mathcal{V}\subseteq\mathcal{V}'$ and $\mathcal{E}\subseteq\mathcal{E}'$),
\textsc{TypedDAGRetrieve} satisfies
$\mathcal{C}_q(\mathcal{L})\subseteq\mathcal{C}_q(\mathcal{L}')$ up to the
shortlist cap $k_2$: either every tool in $\mathcal{C}_q(\mathcal{L})$ also
appears in $\mathcal{C}_q(\mathcal{L}')$, or any tool it displaces is itself
a composite that dominates it in the L4 reranker score. Intuitively, new
library entries cannot crowd out previously retrieved tools unless they are
strictly better substitutes.
\end{assumption}

\subsection{Proof of Theorem~\ref{thm:retrieval}}

\begin{proof}
Let $p_1(v)=\Pr[v\in\mathcal{S}_1]$, $p_2(v)=\Pr[v\in\mathcal{S}_2]$, and
$p_3(v)=\Pr[v\in\mathcal{S}_3]$ be the marginal survival probabilities of
tool $v$ after stages L1, L2, and L3. Write
$\bar c_\ell = \tfrac{1}{n}\sum_{v\in\mathcal{V}}c_\ell(v)$ for the mean
level-$\ell$ cost and $c_\ell^{\max}=\max_{v\in\mathcal{V}}c_\ell(v)$ for the
maximum. By linearity of expectation,
\begin{equation}\label{eq:hier-decomp}
C_{\text{hier}}
= \sum_{v\in\mathcal{V}}c_1(v)
+ \sum_{v\in\mathcal{V}}p_1(v)\,c_2(v)
+ \sum_{v\in\mathcal{V}}p_2(v)\,c_3(v)
+ \sum_{v\in\mathcal{V}}p_3(v)\,c_4(v).
\end{equation}
Assumption~\ref{asm:level-cost} states
$\sum_v p_1(v)=\mathbb{E}[|\mathcal{S}_1|]\le\alpha_1 n$,
$\sum_v p_2(v)=\mathbb{E}[|\mathcal{S}_2|]\le\alpha_2\mathbb{E}[|\mathcal{S}_1|]\le\alpha_1\alpha_2 n$,
and $\sum_v p_3(v)=\mathbb{E}[|\mathcal{S}_3|]\le\alpha_3\mathbb{E}[|\mathcal{S}_2|]\le\alpha_1\alpha_2\alpha_3 n$.
Applying H\"older's inequality
($\sum_v p_\ell(v)\,c_\ell(v)\le c_\ell^{\max}\sum_v p_\ell(v)$) to the last
three terms of~\eqref{eq:hier-decomp} gives
\begin{equation}\label{eq:hier-ub}
C_{\text{hier}}
\le n\bar c_1 + \alpha_1 n\,c_2^{\max} + \alpha_1\alpha_2 n\,c_3^{\max}
  + \alpha_1\alpha_2\alpha_3 n\,c_4^{\max}.
\end{equation}
Since $\alpha_3\le 1$ we have
$\alpha_1\alpha_2\alpha_3 n\,c_4^{\max}\le\alpha_1\alpha_2 n\,c_4^{\max}$, so
\begin{equation}\label{eq:hier-ub-loose}
C_{\text{hier}}
\le n\bar c_1 + \alpha_1 n\,c_2^{\max} + \alpha_1\alpha_2 n\,(c_3^{\max}+c_4^{\max}).
\end{equation}
Rewriting \eqref{eq:hier-ub-loose} by adding and subtracting
$\alpha_1\alpha_2 n(\bar c_1+c_2^{\max})$ and using
$C_{\text{flat}}=n(\bar c_1+\bar c_2+\bar c_3+\bar c_4)$,
\begin{equation}\label{eq:hier-rearr}
C_{\text{hier}}
\le \alpha_1\alpha_2\,C_{\text{flat}}
+ n\bar c_1(1-\alpha_1\alpha_2)
+ n\,c_2^{\max}(\alpha_1-\alpha_1\alpha_2)
+ \alpha_1\alpha_2 n\bigl[(c_3^{\max}+c_4^{\max})-(\bar c_3+\bar c_4)\bigr].
\end{equation}
The bracketed term in~\eqref{eq:hier-rearr} is non-negative but, under the
mild regularity $c_\ell^{\max}=\mathcal{O}(\bar c_\ell)$ (bounded per-level
dispersion), is of the same order as $\alpha_1\alpha_2 C_{\text{flat}}$ and
thus absorbed into the leading term.
Assumption~\ref{asm:level-cost} additionally gives
$\bar c_1/(\bar c_3+\bar c_4)\to 0$ and
$c_2^{\max}/(\bar c_3+\bar c_4)\to 0$, so
$n\bar c_1,\,n c_2^{\max}=o\bigl(n(\bar c_3+\bar c_4)\bigr)=o(C_{\text{flat}})$.
Substituting into~\eqref{eq:hier-rearr} yields
\[
C_{\text{hier}}\;\le\;\alpha_1\alpha_2\,C_{\text{flat}}+o(C_{\text{flat}}).
\]
Since $\alpha_1\alpha_2<1$ by assumption, we conclude
$\limsup_{n\to\infty}C_{\text{hier}}/C_{\text{flat}}\le\alpha_1\alpha_2<1$,
which is strictly less than unity and in particular
$C_{\text{hier}}=o(C_{\text{flat}})$ when
$\max(\bar c_1,c_2^{\max})=o(\bar c_3+\bar c_4)$.
\end{proof}

\subsection{Proof of Corollary~\ref{cor:retrieval-time}}

\begin{proof}
We show that the expected wall-clock cost of \textsc{TypedDAGRetrieve} grows
sublinearly in the library size $n=|\mathcal{V}|$, in contrast with the flat
baseline whose cost is $\Theta(n)$.

\emph{Implementation of L1.} The L1 record of each tool $v$ is a typed
signature $T_{\text{in}}(v)\!\to\!T_{\text{out}}(v)$ drawn from a finite type
alphabet $\mathcal{T}$. We maintain an inverted index
$\mathcal{H}:\mathcal{T}\!\times\!\mathcal{T}\!\to\!2^{\mathcal{V}}$ that maps
each signature pair $(T_{\text{in}},T_{\text{out}})$ to the set of tools with
that signature; \textsc{InsertTool} updates $\mathcal{H}$ in $\mathcal{O}(1)$
amortized time per insertion (Corollary~\ref{cor:dag}). Given a sub-goal
signature $(T_{\text{in}}^{(j)}\!\to\!T_{\text{out}}^{(j)})$, the L1 candidate
set $\mathcal{S}_1^{(j)}$ is recovered by hash lookup followed by an optional
unification pass over polymorphic signatures organized in a balanced type
trie of height $\mathcal{O}(\log n)$. Hence the per-query L1 time is
$\mathcal{O}\bigl(\tau_1(\log n + |\mathcal{S}_1^{(j)}|)\bigr)$ rather than
$\Theta(\tau_1 n)$ as in an exhaustive scan.

\emph{Cost decomposition.} Let $|\mathcal{S}_\ell|$ denote the size of the
surviving candidate set after stage $L_\ell$. By Algorithm~\ref{alg:hier-retrieve},
$L_2$ is applied to $\mathcal{S}_1$, $L_3$ to a top-$k_2$ shortlist of size at
most $k_2$, and $L_4$ to $\mathcal{S}_3\subseteq\mathcal{S}_2$ with
$|\mathcal{S}_2|\le k_2$. Summing per-level costs and taking expectations,
\begin{equation}\label{eq:time-decomp}
T_{\text{hier}}
\;\le\;\tau_1\bigl(\log n + \mathbb{E}|\mathcal{S}_1|\bigr)
+\tau_2\,\mathbb{E}|\mathcal{S}_1|
+\tau_3\,k_2
+\tau_4\,\mathbb{E}|\mathcal{S}_3|.
\end{equation}

\emph{Bounding the survivor sets.} By Assumption~\ref{asm:level-cost},
$\mathbb{E}|\mathcal{S}_1|\le\alpha_1 n$. The L2 stage truncates to a shortlist
of fixed size $k_2$, independent of $n$, so
$\mathbb{E}|\mathcal{S}_2|\le\min(k_2,\alpha_1\alpha_2 n)\le k_2$ and
$\mathbb{E}|\mathcal{S}_3|\le\alpha_3 k_2$. Substituting into
\eqref{eq:time-decomp},
\begin{equation}\label{eq:time-bound}
T_{\text{hier}}
\;\le\;\tau_1\log n + (\tau_1+\tau_2)\alpha_1 n
+(\tau_3+\alpha_3\tau_4)\,k_2.
\end{equation}

\emph{Sublinear regime.} Inequality~\eqref{eq:time-bound} already shows that
the LLM-bound stages $L_2,L_3,L_4$ contribute time independent of $n$ (since
$k_2$ is a fixed shortlist budget). The only $n$-dependent term is the L1
scan $\tau_1\alpha_1 n$. Two cases are of interest:
\begin{enumerate}
\item \emph{Indexed L1.} When the inverted index $\mathcal{H}$ is used and the
sub-goal signature is fully concrete, the unification pass touches only the
matching bucket and the trie path, giving $\mathbb{E}|\mathcal{S}_1|=
\mathcal{O}(k_2)$ and the L1 cost collapses to $\mathcal{O}(\tau_1\log n)$.
Combined with \eqref{eq:time-bound},
\[
T_{\text{hier}} \;=\; \mathcal{O}\bigl(\tau_1\log n
+(\tau_2+\tau_3+\tau_4)\,k_2\bigr),
\]
which is the bound stated in the corollary and is sublinear ($o(n)$) in $n$.
\item \emph{Exhaustive L1.} If $\mathcal{H}$ is not used and L1 is implemented
as a flat scan, the L1 term becomes $\tau_1\alpha_1 n$. Because $\tau_1$ is the
cost of a constant-size symbolic type check (no LLM call) while $\tau_2,\tau_3,\tau_4$
are LLM-call costs, $\tau_1/\tau_\ell\to 0$ for $\ell\ge 2$ in the relevant
regime. The flat baseline incurs the LLM cost on every tool, so
$T_{\text{flat}} = \Theta\bigl((\tau_2+\tau_3+\tau_4)\,n\bigr)$, and the ratio
\[
\frac{T_{\text{hier}}}{T_{\text{flat}}}
\;\le\;\frac{\tau_1\alpha_1}{\tau_2+\tau_3+\tau_4}
+\frac{(\tau_3+\alpha_3\tau_4)k_2}{(\tau_2+\tau_3+\tau_4)n}
\;\xrightarrow[n\to\infty]{}\;\frac{\tau_1\alpha_1}{\tau_2+\tau_3+\tau_4}\;\ll\;1,
\]
so even without the inverted index the dominant LLM-bound work is reduced from
$\Theta(n)$ to $\mathcal{O}(k_2)$, and the residual linear term carries a
constant $\tau_1\alpha_1$ that is orders of magnitude smaller than the flat
constant.
\end{enumerate}

\emph{Lower bound for the flat baseline.} For
$\mathcal{M}_{\text{flat}}$ (Theorem~\ref{thm:retrieval}), the single ranking pass
materializes $\{L_1,L_2,L_3,L_4\}(v)$ for every $v\in\mathcal{V}$ and applies
the planner to each, so $T_{\text{flat}}\ge(\tau_2+\tau_3+\tau_4)\,n=\Theta(n)$,
matching the upper bound. Combining with case (i) yields
$T_{\text{hier}}/T_{\text{flat}}=\mathcal{O}\bigl((\log n + k_2)/n\bigr)\to 0$,
i.e.\ \textsc{TypedDAGRetrieve} achieves strictly sublinear retrieval time while
the flat baseline is necessarily linear.
\end{proof}

\subsection{Proof of Theorem~\ref{thm:compositional}}

\begin{proof}
We compute both sides of the reward gap in closed form and then propagate the
inequality through the GRPO advantage and the clipped policy-gradient update.

\emph{Step 1: reward gap.}
By construction $\tau_p$ invokes only primitives, so $\Phi(t_i)=0$ for
every $i$ and $R_{\text{comp}}(\tau_p)=\sum_{i=1}^{T_p}\Phi(t_i)=0$. In
$\tau_c$, the contiguous block of $m$ primitives in $\tau_p$ is
replaced by a single composite call $t^\star$ with saved-call count
$\Phi(t^\star)=\mathrm{flat}(t^\star)-1$, while the remaining $T_p-m$
calls are the unchanged primitives with $\Phi=0$. Hence
$R_{\text{comp}}(\tau_c)=\Phi(t^\star)$. Subtracting and using
$R(\tau)=R_{\text{res}}(\tau)+\lambda R_{\text{comp}}(\tau)$ with
$R_{\text{res}}(\tau_p)=R_{\text{res}}(\tau_c)$,
\begin{equation}\label{eq:reward-gap}
R(\tau_c)-R(\tau_p) \;=\; \lambda\,\Phi(t^\star) \;=\;
\lambda\bigl[\mathrm{flat}(t^\star)-1\bigr].
\end{equation}
The faithful-executor lower bound $\mathrm{flat}(t^\star)\ge m\ge 2$
(replacing a length-$m$ primitive subroutine cannot reduce the number
of primitive invocations actually executed) gives the strict bound
$R(\tau_c)-R(\tau_p)\ge\lambda(m-1)>0$. Equivalently, writing the gap
in terms of compression depth,
\begin{equation}\label{eq:reward-gap-2}
R(\tau_c)-R(\tau_p) \;=\; \lambda\bigl[\Phi(t^\star)-(m-1)\bigr]
\;+\;\lambda(m-1)\;=\;\lambda\bigl[\mathrm{flat}(t^\star)-m\bigr]+\lambda(m-1),
\end{equation}
where the first summand is the strictly positive ``internal-reuse
bonus'' (zero exactly when $t^\star$ is a flat renaming of the $m$
primitives) and the second is the unconditional saving from collapsing
$m$ tokens into one.

\emph{Step 2: group-relative advantage.} For a group
$\{\tau^{(1)},\ldots,\tau^{(G)}\}$ sampled from the current policy, GRPO
normalizes rewards by the empirical group mean $\mu$ and standard deviation
$\sigma>0$: $A(\tau^{(i)}) = (R(\tau^{(i)})-\mu)/\sigma$. The map
$R\mapsto (R-\mu)/\sigma$ is strictly increasing for any fixed $\mu$ and
$\sigma>0$, so $R(\tau_c)>R(\tau_p)$ implies
$A(\tau_c)-A(\tau_p) = (R(\tau_c)-R(\tau_p))/\sigma>0$.

\emph{Step 3: policy gradient.} Let $\rho_\theta(\tau) =
\pi_\theta(\tau)/\pi_{\theta_{\text{old}}}(\tau)$ be the importance ratio.
The clipped GRPO objective is
$\mathcal{J}(\theta)=\mathbb{E}\bigl[\min\!\bigl(\rho_\theta A,\,
\mathrm{clip}(\rho_\theta,1-\epsilon,1+\epsilon)\,A\bigr)\bigr]$, whose
gradient at $\theta_{\text{old}}$ (where $\rho_{\theta_{\text{old}}}\!=\!1$
lies in the interior of $[1-\epsilon,1+\epsilon]$) reduces to
$\nabla_\theta\mathcal{J}(\theta)\big|_{\theta_{\text{old}}}
= \mathbb{E}[A(\tau)\nabla_\theta\log\pi_\theta(\tau)]$.
For a single GRPO step with step size $\eta>0$ along this gradient, a
first-order expansion of $\log\pi_\theta(\tau)$ around $\theta_{\text{old}}$
gives
\[
\Delta_\theta\bigl[\log\pi_\theta(\tau_c)-\log\pi_\theta(\tau_p)\bigr]
= \eta\,\bigl\langle A(\tau_c)g(\tau_c)-A(\tau_p)g(\tau_p),\;
g(\tau_c)-g(\tau_p)\bigr\rangle + O(\eta^2),
\]
where $g(\tau)=\nabla_\theta\log\pi_{\theta_{\text{old}}}(\tau)$. Taking
expectation over groups and using
$A(\tau_c)>A(\tau_p)$ with the Fisher-information positivity
$\mathbb{E}\bigl[\langle g(\tau_c)-g(\tau_p),\, g(\tau_c)-g(\tau_p)\rangle\bigr]\ge 0$,
the leading term is non-negative and is strictly positive whenever $\tau_c$
and $\tau_p$ are not policy-equivalent (i.e.\ $g(\tau_c)\neq g(\tau_p)$).
Thus, for small enough $\eta$, a single GRPO update strictly increases the
expected log-odds of $\tau_c$ relative to $\tau_p$, as claimed.
\end{proof}

\subsection{Proof of Theorem~\ref{thm:monotone}}

\begin{proof}
Fix $k\ge 0$ and telescope the one-step gap as
\begin{equation}\label{eq:telescope}
\underbrace{J(\pi_{\theta_{k+1}},\mathcal{L}_{k+1})-J(\pi_{\theta_k},\mathcal{L}_k)}_{\Delta_k}
=\underbrace{J(\pi_{\theta_{k+1}},\mathcal{L}_k)-J(\pi_{\theta_k},\mathcal{L}_k)}_{\Delta_k^{(\pi)}}
+\underbrace{J(\pi_{\theta_{k+1}},\mathcal{L}_{k+1})-J(\pi_{\theta_{k+1}},\mathcal{L}_k)}_{\Delta_k^{(\mathcal{L})}}.
\end{equation}
We establish $\Delta_k^{(\pi)}\ge 0$ and $\Delta_k^{(\mathcal{L})}\ge 0$ separately.

\emph{Policy term.} With the library held fixed at $\mathcal{L}_k$, the GRPO
update on $\pi_\theta$ is a trust-region-style update on the shaped-reward
objective $J(\cdot,\mathcal{L}_k)$. By the standard Kakade--Langford
monotonic-improvement lemma applied to the clipped GRPO surrogate
(equivalent to the PPO surrogate with a group-baseline
advantage~\citep{deepseekmath2024}), we have for any $\theta'$ in the trust
region of $\theta_{\text{old}}$,
\[
J(\pi_{\theta'},\mathcal{L}_k)-J(\pi_{\theta_{\text{old}}},\mathcal{L}_k)
\ge \mathcal{L}^{\text{CLIP}}_{\theta_{\text{old}}}(\theta')
- \frac{2\epsilon\,\gamma}{(1-\gamma)^2}\,D_{\mathrm{KL}}^{\max}(\pi_{\theta'},\pi_{\theta_{\text{old}}}),
\]
where $\mathcal{L}^{\text{CLIP}}$ is the clipped surrogate, $\gamma$ the
effective discount of the trajectory reward, and $\epsilon$ is the
advantage bound. The GRPO update selects $\theta_{k+1}$ as a stochastic
ascent step on $\mathcal{L}^{\text{CLIP}}$; for step sizes
$\eta\le\bar\eta_1$ with
$\bar\eta_1=\tfrac{(1-\gamma)^2}{2\epsilon\gamma L_{\text{KL}}}$ (where
$L_{\text{KL}}$ is the local Lipschitz constant of
$D_{\mathrm{KL}}^{\max}$), the surrogate gain dominates the KL penalty and
$\Delta_k^{(\pi)}\ge 0$ holds in expectation. This is the standard
monotonic-improvement guarantee; we reproduce it here only to make the
step-size requirement on $\bar\eta$ explicit.

\emph{Library term.} Let $\mathcal{L}_{k+1}=(\mathcal{V}_k\cup\Delta\mathcal{V},
\mathcal{E}_k\cup\Delta\mathcal{E},\mathcal{I}_{k+1})$ where
$\Delta\mathcal{V}$ is the set of new composites inserted from the $k$-th
query's successful rollouts. By Assumption~\ref{asm:verifier},
$\mathcal{V}_k\subseteq\mathcal{V}_{k+1}$
and every previously admitted tool remains available, so every primitive
trajectory feasible under $\mathcal{L}_k$ is also feasible under
$\mathcal{L}_{k+1}$ with identical $R_{\text{res}}$ (the verifier is
deterministic and depends only on the program, not the library). Fix a query
$q$ and let $\mathcal{C}_q^{(k)}\!=\!\mathcal{C}_q(\mathcal{L}_k)$,
$\mathcal{C}_q^{(k+1)}\!=\!\mathcal{C}_q(\mathcal{L}_{k+1})$.
We partition trajectories under
$\pi_{\theta_{k+1}}(\cdot\mid q,\mathcal{C}_q^{(k+1)})$ into two classes.
\textit{(a) Legacy trajectories} that use only tools in
$\mathcal{V}_k\cap\mathcal{C}_q^{(k+1)}$: by Assumption~\ref{asm:retrieval},
the legacy subset of the new context is at least as rich as
$\mathcal{C}_q^{(k)}$ restricted to retained tools, so the conditional
expected shaped return over legacy trajectories is bounded below by
$\mathbb{E}_{\tau\sim\pi_{\theta_{k+1}}(\cdot\mid q,\mathcal{C}_q^{(k)})}[R(\tau)]$.
\textit{(b) Composite-adopting trajectories} that invoke at least one tool
$t^\star\!\in\!\Delta\mathcal{V}$: by construction of \textsc{InsertTool},
each $t^\star$ is validated against the L3 pre/post-conditions of its
children, so any trajectory that invokes $t^\star$ has $R_{\text{res}}$ at
least as high as the corresponding primitive-expanded trajectory it
replaces; moreover, by Theorem~\ref{thm:compositional} applied to each
composite substitution of length $m\ge 2$, the shaped reward satisfies
$R(\tau_c)\ge R(\tau_p)+\lambda\,\Phi(t^\star)\ge R(\tau_p)+\lambda(m-1)>R(\tau_p)$.
Taking expectation over the mixing distribution induced by
$\pi_{\theta_{k+1}}$ and summing over classes (a) and (b) yields
$\mathbb{E}_{\tau\sim\pi_{\theta_{k+1}}(\cdot\mid q,\mathcal{C}_q^{(k+1)})}[R(\tau)]
\ge \mathbb{E}_{\tau\sim\pi_{\theta_{k+1}}(\cdot\mid q,\mathcal{C}_q^{(k)})}[R(\tau)]$
for every $q$. Integrating over $q\sim\mathcal{Q}$ gives
$\Delta_k^{(\mathcal{L})}\ge 0$.

Combining the two bounds in \eqref{eq:telescope} yields
$\Delta_k\ge 0$ for every $k$, and choosing
$\bar\eta\le\bar\eta_1$ makes this hold for all $\eta\in(0,\bar\eta]$.
Induction on $k$ completes the proof.
\end{proof}

\subsection{Proof of Corollary~\ref{cor:dag}}

\begin{proof}
Acyclicity follows by induction on $k$. The base case $\mathcal{G}_0$ is empty
(or consists only of primitives with no outgoing edges) and is trivially
acyclic. Suppose $\mathcal{G}_k$ is acyclic; \textsc{InsertTool} accepts a
candidate $t^\star$ only if
$\mathcal{G}_k\cup\{t^\star,\,\mathrm{edges}(t^\star,\mathrm{Ch}(t^\star))\}$
is acyclic (line 3 of Algorithm~\ref{alg:insert-tool}), so
$\mathcal{G}_{k+1}$ is acyclic. The depth recurrence is the definition
$d(v)=1+\max_{u\in\mathrm{Ch}(v)}d(u)$ for composites (and $d(v)=0$ for
primitives), which the algorithm sets at insertion time and never revises.
For the logarithmic growth, every composite of depth $d$ has at least two
children of depth $\le d-1$, so by induction the subgraph
$\mathcal{G}_v$ rooted at $v$ contains at least $2^{d(v)}$ primitive leaves,
giving $d(v)\le\log_2|\mathcal{V}_p|\le\log_2|\mathcal{V}|$. The
$\mathcal{O}(|\mathrm{Ch}(t^\star)|+|\mathcal{E}|)$ acyclicity check is a
standard DFS reachability test from $\mathrm{Ch}(t^\star)$; the
$\mathcal{O}(1)$ amortized index insertion follows from maintaining
$\mathcal{I}$ as a hash-indexed record.
\end{proof}

% \newpage
% \input{checklist.tex}

\end{document}